\documentclass{article}

\usepackage[final]{nips_2016}

\usepackage[utf8]{inputenc} 
\usepackage[T1]{fontenc}    
\usepackage{hyperref}       
\usepackage{url}            
\usepackage{booktabs}       
\usepackage{amsfonts}       
\usepackage{nicefrac}       
\usepackage{microtype}      
\usepackage{amsmath,amssymb,amsthm,bm,algorithm,algorithmic,graphicx}
\usepackage{epstopdf,color}
\usepackage[usenames,dvipsnames,svgnames,table]{xcolor}
\usepackage{subcaption}

\def\b{\bm b}
\def\W{\mathbf{W}}
\def\S{\mathbf{\Sigma}}
\def\SS{\S_\text{new}}
\def\m{{\bm \mu}}

\def\g{h}
\def\f{g}

\def\d{\delta}
\def\dn{\bm{\delta}}

\def\th{\bm \theta}
\def\tr{^{\top}}
\def\a{\alpha}
\def\reals{\mathbb{R}}
\def\E#1{\mathbb{E}\left[ #1 \right]}
\def\P#1{P\left( #1 \right)}

\def\Id{\mathbf{I}}

\def\ymin{y^{\min}}
\def\ymax{y^{\max}}

\DeclareMathOperator*{\erf}{\textrm{erf}}
\DeclareMathOperator*{\median}{\textrm{median}}
\DeclareMathOperator*{\argmax}{\textrm{argmax}}
\renewcommand\max{\textrm{max}}
\renewcommand\min{\textrm{min}}

\newtheorem{theorem}{Proposition}
\newtheorem{theoremAppendix}{Proposition}

\title{Learning values across many orders of magnitude}

\author{
  Hado van Hasselt
  \And
  Arthur Guez
  \And
  Matteo Hessel\\[2em]
  Google DeepMind
  \And
  Volodymyr Mnih
  \And
  David Silver 
}

\begin{document}

\maketitle

\begin{abstract}
Most learning algorithms are not invariant to the scale of the function that is being approximated.
We propose to adaptively normalize the targets used in learning. 
This is useful in value-based reinforcement learning, where the magnitude of appropriate value approximations can change over time when we update the policy of behavior.
Our main motivation is prior work on learning to play Atari games, where the rewards were all clipped to a predetermined range.
This clipping facilitates learning across many different games with a single learning algorithm, but a clipped reward function can result in qualitatively different behavior.
Using the adaptive normalization we can remove this domain-specific heuristic without diminishing overall performance.
\end{abstract}

\section{Introduction}
Our main motivation is the work by \citet{Mnih:2015}, in which Q-learning \citep{Watkins:1989} is combined with a deep convolutional neural network \citep[cf.][]{LeCun:2015}. The resulting deep Q network (DQN) algorithm learned to play a varied set of Atari 2600 games from the Arcade Learning Environment (ALE) \citep{Bellemare:2013}, which was proposed as an evaluation framework to test general learning algorithms on solving many different interesting tasks.  DQN was proposed as a singular solution, using a single set of hyperparameters, but the magnitudes and frequencies of rewards vary wildly between different games. To overcome this hurdle, the rewards and temporal-difference errors were clipped to $[-1,1]$.  For instance, in Pong the rewards are bounded by $-1$ and $+1$ while in Ms.~Pac-Man eating a single ghost can yield a reward of up to $+1600$, but DQN clips the latter to $+1$ as well. This is not a satisfying solution for two reasons.
First, such clipping introduces domain knowledge. Most games have sparse non-zero rewards outside of $[-1,1]$.  Clipping then results in optimizing the frequency of rewards, rather than their sum.  This is a good heuristic in Atari, but it does not generalize to other domains.
More importantly, the clipping changes the objective, sometimes resulting in qualitatively different policies of behavior.

We propose a method to adaptively normalize the targets used in the learning updates.  If these targets are guaranteed to be normalized it is much easier to find suitable hyperparameters.
The proposed technique is not specific to DQN and is more generally applicable in supervised learning and reinforcement learning. There are several reasons such normalization can be desirable.  First, sometimes we desire a single system that is able to solve multiple different problems with varying natural magnitudes, as in the Atari domain.  Second, for multi-variate functions the normalization can be used to disentangle the natural magnitude of each component from its relative importance in the loss function.  This is particularly useful when the components have different units, such as when we predict signals from sensors with different modalities.  Finally, adaptive normalization can help deal with non-stationary. For instance, in reinforcement learning the policy of behavior can change repeatedly during learning, thereby changing the distribution and magnitude of the values.

\subsection{Related work}
Many machine-learning algorithms rely on a-priori access to data to properly tune relevant hyper-parameters \citep{Bergstra:2011,Bergstra:2012,Snoek:2012}.
However, it is much harder to learn efficiently from a stream of data when we do not know the magnitude of the function we seek to approximate beforehand, or if these magnitudes can change over time, as is for instance typically the case in reinforcement learning when the policy of behavior improves over time.

Input normalization has long been recognized as important to efficiently learn non-linear approximations such as neural networks \citep{LeCun:1998}, leading to research on how to achieve scale-invariance on the inputs \citep[e.g.,][]{Ross:2013,Ioffe:2015,Desjardins:2015}. Output or target normalization has not received as much attention, probably because in supervised learning data is commonly available before learning commences, making it straightforward to determine appropriate normalizations or to tune hyper-parameters. However, this assumes the data is available a priori, which is not true in online (potentially non-stationary) settings.

Natural gradients \citep{Amari:1998} are invariant to reparameterizations of the function approximation, thereby avoiding many scaling issues, but these are computationally expensive for functions with many parameters such as deep neural networks.  This is why approximations are regularly proposed, typically trading off accuracy to computation \citep{Martens:2015}, and sometimes focusing on a certain aspect such as input normalization \citep{Desjardins:2015,Ioffe:2015}.  Most such algorithms are not fully invariant to rescaling the targets.

In the Atari domain several algorithmic variants and improvements for DQN have been proposed \citep{vanHasselt:2016, Bellemare:2016, Schaul:2016, Wang:2016}, as well as alternative solutions \citep{Liang:2016, Mnih:2016}.  However, none of these address the clipping of the rewards or explicitly discuss the impacts of clipping on performance or behavior.

\subsection{Preliminaries}

Concretely, we consider learning from a stream of data $\{ (X_t, Y_t) \}_{t=1}^{\infty}$ where the inputs $X_t \in \reals^n$ and targets $Y_t \in \reals^k$ are real-valued tensors.
The aim is to update parameters $\th$ of a function $f_{\th} : \reals^n \to \reals^k$ such that the output $f_{\th}(X_t)$ is (in expectation) close to the target $Y_t$ according to some loss $l_t(f_{\th})$, for instance defined as a squared difference: $l_t(f_{\th}) = \frac{1}{2}(f_{\th}(X_t) - Y_t)\tr (f_{\th}(X_t) - Y_t)$.
A canonical update is stochastic gradient descent (SGD).
For a sample $(X_t,Y_t)$, the update is then
$
\th_{t+1}
=
\th_t - \a \nabla_{\th} l_t(f_{\th})
$, where $\a \in [0,1]$ is a step size.
The magnitude of this update depends on both the step size and the loss, and it is hard to pick suitable step sizes when nothing is known about the magnitude of the loss.

An important special case is when $f_{\th}$ is a neural network \citep{McCullochPitts:1943,Rosenblatt:62}, which are often trained with a form of SGD \citep{Rumelhart:86}, with hyperparameters that interact with the scale of the loss. Especially for deep neural networks \citep{LeCun:2015,Schmidhuber:2015} large updates may harm learning, because these networks are highly non-linear and such updates may `bump' the parameters to regions with high error.

\section{Adaptive normalization with Pop-Art}
We propose to normalize the targets $Y_t$, where the normalization is learned separately from the approximating function.  We consider an affine transformation of the targets
\begin{equation}\label{norm_Y}
\tilde{Y}_t = \S_t^{-1} (Y_t - \m_t) \,,
\end{equation}
where $\S_t$ and $\m_t$ are \emph{scale} and \emph{shift} parameters that are learned from data. The scale matrix $\S_t$ can be dense, diagonal, or defined by a scalar $\sigma_t$ as $\S_t = \sigma_t \Id$. Similarly, the shift vector $\m_t$ can contain separate components, or be defined by a scalar $\mu_t$ as $\m_t = \mu_t {\bm 1}$.  
We can then define a loss on a normalized function $\f(X_t)$ and the normalized target $\tilde{Y}_t$.  The unnormalized approximation for any input $x$ is then given by
$
f(x) = \S \f(x) + \m
$, where
$\f$ is the \emph{normalized function} and $f$ is the \emph{unnormalized function}.

At first glance it may seem we have made little progress.  If we learn $\S$ and $\m$ using the same algorithm as used for the parameters of the function $\f$, then the problem has not become fundamentally different or easier; we would have merely changed the structure of the parameterized function slightly.  Conversely, if we consider tuning the scale and shift as hyperparameters then tuning them is not fundamentally easier than tuning other hyperparameters, such as the step size, directly.

Fortunately, there is an alternative.  We propose to update $\S$ and $\m$ according to a separate objective with the aim of normalizing the updates for $\f$. Thereby, we decompose the problem of learning an appropriate normalization from learning the specific shape of the function.
The two properties that we want to simultaneously achieve are
\begin{enumerate}
\item[\textbf{(ART)}] to update scale $\S$ and shift $\m$ such that $\S^{-1} (Y - \m)$ is appropriately normalized, and
\item[\textbf{(POP)}] to preserve the outputs of the unnormalized function when we change the scale and shift.
\end{enumerate}
We discuss these properties separately below.
We refer to algorithms that combine output-preserving updates and adaptive rescaling, as \textbf{Pop-Art} algorithms, an acronym for ``Preserving Outputs Precisely, while Adaptively Rescaling Targets''.

\subsection{Preserving outputs precisely}\label{sec_pop}
Unless care is taken, repeated updates to the normalization might make learning harder rather than easier because the normalized targets become non-stationary.  More importantly, whenever we adapt the normalization based on a certain target, this would simultaneously change the output of the unnormalized function of all inputs.  If there is little reason to believe that other unnormalized outputs were incorrect, this is undesirable and may hurt performance in practice, as illustrated in Section \ref{sec_bin}. We now first discuss how to prevent these issues, before we discuss how to update the scale and shift.

The only way to avoid changing all outputs of the unnormalized function whenever we update the scale and shift is by changing the normalized function $\f$ itself simultaneously.  The goal is to preserve the outputs from before the change of normalization, for all inputs.  This prevents the normalization from affecting the approximation, which is appropriate because its objective is solely to make learning easier, and to leave solving the approximation itself to the optimization algorithm.

Without loss of generality the unnormalized function can be written as
\begin{equation}\label{def_f}
f_{\th,\S,\m,\W,\b}(x)
\quad\equiv\quad
\S\f_{\th,\W,\b}(x) + \m
\quad\equiv\quad
\S(\W \g_{\th}(x) + \b) + \m \,,
\end{equation}
where $\g_{\th}$ is a parametrized (non-linear) function, and $\f_{\th,\W,\b} = \W \g_{\th}(x) + \b$ is the normalized function. It is not uncommon for deep neural networks to end in a linear layer, and then $\g_{\th}$ can be the output of the last (hidden) layer of non-linearities.  Alternatively, we can always add a square linear layer to any non-linear function $\g_{\th}$ to ensure this constraint, for instance initialized as $\W_0 = \Id$ and $\b_0 = {\bm 0}$.

The following proposition shows that we can update the parameters $\W$ and $\b$ to fulfill the second desideratum of preserving outputs precisely for any change in normalization.
\def\SS{\S_\mathrm{new}}
\def\mm{\m_\mathrm{new}}
\def\WW{\W_\mathrm{new}}
\def\bb{\b_\mathrm{new}}
\begin{theorem}\label{thm_pop}
Consider a function $f : \reals^n \to \reals^k$ defined as in \eqref{def_f} as
\[
f_{\th,\S,\m,\W,\b}(x)
\quad\equiv\quad
\S \left( \W \g_{\th}(x) + \b \right) + \m \,,
\]
where $\g_{\th} : \reals^n \to \reals^m$ is any non-linear function of $x \in \reals^n$, $\S$ is a $k\times k$ matrix, $\m$ and $\b$ are $k$-element vectors, and $\W$ is a $k \times m$ matrix. Consider any change of the scale and shift parameters from $\S$ to $\SS$ and from $\m$ to $\mm$, where $\SS$ is non-singular. If we then additionally change the parameters $\W$ and $\b$ to $\WW$ and $\bb$, defined by
\begin{align*}
\WW & = \SS^{-1} \S \W
& \text{ and } & & 
\bb & = \SS^{-1}\left( \S \b + \m - \mm \right) \,,
\end{align*}
then the outputs of the unnormalized function $f$ are preserved precisely in the sense that
\[
f_{\th,\S,\m,\W,\b}(x) = f_{\th,\SS,\mm,\WW,\bb}(x)\,,\qquad\forall x \,.
\]
\end{theorem}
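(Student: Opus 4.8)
The plan is to prove this by direct substitution: I would plug the proposed new parameters $\WW$ and $\bb$ into the definition of $f_{\th,\SS,\mm,\WW,\bb}$ and verify algebraically that it collapses to $f_{\th,\S,\m,\W,\b}$, using only the fact that $\SS$ is non-singular so that $\SS\,\SS^{-1} = \Id$. Since $\g_\th(x)$ is never manipulated — it is only multiplied on the left by matrices — the argument is carried out pointwise and holds for every $x$ automatically.

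First I would expand the original function using \eqref{def_f}, writing
\[
f_{\th,\S,\m,\W,\b}(x) = \S\W\g_\th(x) + \S\b + \m \,,
\]
so that the three additive contributions (the term linear in $\g_\th(x)$, the bias term, and the shift) are cleanly separated; this is the target expression I want to recover. Then I would expand the new function the same way as $f_{\th,\SS,\mm,\WW,\bb}(x) = \SS\WW\g_\th(x) + \SS\bb + \mm$ and substitute the definitions $\WW = \SS^{-1}\S\W$ and $\bb = \SS^{-1}(\S\b + \m - \mm)$.

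The crucial step is the cancellation $\SS\,\SS^{-1} = \Id$, which is exactly where the non-singularity hypothesis on $\SS$ is used. Applying it to the linear term gives $\SS\SS^{-1}\S\W\g_\th(x) = \S\W\g_\th(x)$, and applying it to the bias term gives $\SS\SS^{-1}(\S\b + \m - \mm) = \S\b + \m - \mm$. Adding the trailing $+\,\mm$ then cancels the $-\mm$, leaving exactly $\S\W\g_\th(x) + \S\b + \m$, which matches the target expression above. I would present these substitutions in a single aligned display so the cancellations are transparent.

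There is no genuine obstacle here: the statement is a routine verification. The only points requiring care are to respect the non-commutativity of matrix multiplication — keeping the factors $\S$, $\W$, and $\SS^{-1}$ in their correct left-to-right order throughout, since $\SS^{-1}\S$ need not equal $\S\,\SS^{-1}$ — and to note explicitly that $\SS^{-1}$ is well defined precisely because $\SS$ was assumed non-singular. Neither the shape nor the parameters $\th$ of $\g_\th$ play any role, which is the conceptual point: the output-preserving correction lives entirely in the final affine layer.
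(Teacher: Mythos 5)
Your proposal is correct and follows essentially the same route as the paper's own proof: expand $f_{\th,\SS,\mm,\WW,\bb}(x)$, substitute the definitions of $\WW$ and $\bb$, cancel via $\SS\,\SS^{-1} = \Id$, and observe that the $\mm$ terms annihilate, leaving $\S\W\g_\th(x) + \S\b + \m$. Your added remarks on non-commutativity and on the irrelevance of $\g_\th$ are sound but do not change the argument.
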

This and later propositions are proven in the appendix.
For the special case of scalar scale and shift, with $\S \equiv \sigma\Id$ and $\m \equiv \mu{\bm 1}$, the updates to $\W$ and $\b$ become
$\WW = (\sigma / \sigma_\mathrm{new})\W$ and $\bb = ( \sigma \b + \m - \m_\mathrm{new} )/\sigma_\mathrm{new}$. 
After updating the scale and shift we can update the output of the normalized function $\f_{\th,\W,\b}(X_t)$ toward the normalized output $\tilde{Y}_t$, using any learning algorithm.  Importantly, the normalization can be updated first, thereby avoiding harmful large updates just before they would otherwise occur.  This observation will be made more precise in Proposition \ref{bounded_normal} in Section \ref{sec_Art}.

Algorithm \ref{alg_popart_sgd} is an example implementation of SGD with Pop-Art for a squared loss.  It can be generalized easily to any other loss by changing the definition of $\dn$.
Notice that $\W$ and $\b$ are updated twice: first to adapt to the new scale and shift to preserve the outputs of the function, and then by SGD.  The order of these updates is important because it allows us to use the new normalization immediately in the subsequent SGD update.
\begin{algorithm}[t]
\caption{\label{alg_popart_sgd} SGD on squared loss with Pop-Art}
\begin{algorithmic}
\STATE For a given differentiable function $\g_{\th}$, initialize $\th$. 
\STATE Initialize $\W = I$, $\b = {\bm 0}$, $\S = I$, and $\m = {\bm 0}$.
\WHILE{learning}
\STATE Observe input $X$ and target $Y$
\STATE Use $Y$ to compute new scale $\SS$ and new shift $\mm$
\STATE $\W \gets \SS^{-1} \S \W \,,\quad \b \gets \SS^{-1} ( \S \b + \m - \mm)$\hfill (rescale $\W$ and $\b$)
\STATE $\S~ \gets \SS\,,\phantom{\S\W}\quad \m \gets \mm$ \hfill (update scale and shift)
\STATE ${\bm h} \gets \g_{\th}(X)$ \hfill (store output of $\g_{\th}$)
\STATE ${\bm J} \gets (\nabla_{\th} \g_{\th,1}(X), \ldots, \nabla_{\th} \g_{\th,m}(X))$ \hfill (compute Jacobian of $\g_{\th}$)
\STATE ${\bm \d} \gets \W {\bm h} + \b - \S^{-1}(Y - \m)$ \hfill (compute normalized error)
\STATE $\th \gets \th - \a {\bm J} \W\tr \dn$ \hfill (compute SGD update for $\th$)
\STATE $\W \gets \W - \a \dn {\bm h}\tr$ \hfill (compute SGD update for $\W$)
\STATE $\b \gets \b - \a \dn$ \hfill (compute SGD update for $\b$)
\ENDWHILE
\end{algorithmic}
\end{algorithm}

\subsection{Adaptively rescaling targets}\label{sec_Art}
A natural choice is to normalize the targets to approximately have zero mean and unit variance.  For clarity and conciseness, we consider scalar normalizations.
It is straightforward to extend to diagonal or dense matrices.
If we have data $\{(X_i,Y_i)\}_{i=1}^t$ up to some time $t$, we then may desire
\begin{align}
& \sum_{i=1}^t (Y_i - \mu_t)/\sigma_t = 0
& \text{and} &&
& \frac{1}{t} \sum_{i=1}^t (Y_i - \mu_t)^2 / \sigma_t^2 = 1 \,,\notag\\
\text{such that}\qquad
\mu_t & = \frac{1}{t} \sum_{i=1}^t Y_i
& \text{and} &
& \sigma_t & = \frac{1}{t} \sum_{i=1}^t Y_i^2 - \mu_t^2\,.\label{batch_mean_variance}
\end{align}
This can be generalized to incremental updates
\begin{align}
\mu_t &= ( 1 - \beta_t ) \mu_{t-1} + \beta_t Y_t
\quad\text{and}\quad
\sigma_t^2 & = \nu_t - \mu_t^2 
\,,\quad\text{where} \quad
\nu_t = (1 - \beta_t ) \nu_{t-1} + \beta_t Y_t^2 \,.\label{online_normal}
\end{align}
Here $\nu_t$ estimates the second moment of the targets and $\beta_t \in [0,1]$ is a step size.  If $\nu_t - \mu_t^2$ is positive initially then it will always remain so, although to avoid issues with numerical precision it can be useful to enforce a lower bound explicitly by requiring $\nu_t - \mu_t^2 \ge \epsilon$ with $\epsilon>0$.
For full equivalence to \eqref{batch_mean_variance} we can use $\beta_t = 1/t$.  If $\beta_t = \beta$ is constant we get exponential moving averages, placing more weight on recent data points which is appropriate in non-stationary settings.

A constant $\beta$ has the additional benefit of never becoming negligibly small.  Consider the first time a target is observed that is much larger than all previously observed targets.  If $\beta_t$ is small, our statistics would adapt only slightly, and the resulting update may be large enough to harm the learning.  If $\beta_t$ is not too small, the normalization can adapt to the large target before updating, potentially making learning more robust.  In particular, the following proposition holds.
\begin{theorem}\label{bounded_normal}
When using updates \eqref{online_normal} to adapt the normalization parameters $\sigma$ and $\mu$, the normalized targets are bounded for all $t$ by
\[
- \sqrt{(1 - \beta_t)/\beta_t} \le (Y_t - \mu_t)/\sigma_t \le \sqrt{(1 - \beta_t)/\beta_t} \,.
\]
\end{theorem}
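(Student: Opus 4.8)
The plan is to reduce the two-sided bound to the single inequality $(Y_t-\mu_t)^2 \le \tfrac{1-\beta_t}{\beta_t}\,\sigma_t^2$, since the claim is symmetric and equivalent to bounding the square of the normalized target. Everything then hinges on expressing both $Y_t - \mu_t$ and $\sigma_t^2$ in terms of the previous statistics and the innovation $Y_t - \mu_{t-1}$.

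First I would establish the elementary identity for the numerator. Substituting the update $\mu_t = (1-\beta_t)\mu_{t-1} + \beta_t Y_t$ gives
\[
Y_t - \mu_t = Y_t - (1-\beta_t)\mu_{t-1} - \beta_t Y_t = (1-\beta_t)(Y_t - \mu_{t-1}) \,,
\]
so that $(Y_t - \mu_t)^2 = (1-\beta_t)^2 (Y_t - \mu_{t-1})^2$.

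The main work, and the step I expect to be the real obstacle, is deriving a clean recursion for the variance estimate $\sigma_t^2 = \nu_t - \mu_t^2$. I would expand $\nu_t = (1-\beta_t)\nu_{t-1} + \beta_t Y_t^2$ and $\mu_t^2 = [(1-\beta_t)\mu_{t-1} + \beta_t Y_t]^2$, collect the $Y_t^2$ terms (which combine as $\beta_t(1-\beta_t)Y_t^2$), factor out $(1-\beta_t)$, and crucially substitute $\nu_{t-1} = \sigma_{t-1}^2 + \mu_{t-1}^2$. The bookkeeping has to be done carefully, because the payoff is recognizing that the remaining terms assemble into a perfect square $\beta_t(\mu_{t-1}-Y_t)^2$, yielding
\[
\sigma_t^2 = (1-\beta_t)\,\sigma_{t-1}^2 + \beta_t(1-\beta_t)\,(Y_t - \mu_{t-1})^2 \,.
\]

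With this recursion in hand the proof closes immediately. Since $\sigma_{t-1}^2 \ge 0$ (the positivity preservation noted after \eqref{online_normal}, which this very recursion also confirms), I would drop that term to obtain the lower bound $\sigma_t^2 \ge \beta_t(1-\beta_t)(Y_t - \mu_{t-1})^2$. Dividing the numerator identity by this bound gives
\[
\frac{(Y_t - \mu_t)^2}{\sigma_t^2} \le \frac{(1-\beta_t)^2 (Y_t - \mu_{t-1})^2}{\beta_t(1-\beta_t)(Y_t - \mu_{t-1})^2} = \frac{1-\beta_t}{\beta_t} \,,
\]
where the degenerate case $Y_t = \mu_{t-1}$ makes the numerator vanish and the bound trivial (assuming $\sigma_t > 0$). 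Taking square roots and using symmetry yields the stated two-sided bound $-\sqrt{(1-\beta_t)/\beta_t} \le (Y_t - \mu_t)/\sigma_t \le \sqrt{(1-\beta_t)/\beta_t}$.
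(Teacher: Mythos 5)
Your proof is correct and follows essentially the same route as the paper's: the same numerator identity $Y_t-\mu_t = (1-\beta_t)(Y_t-\mu_{t-1})$, and the same key inequality ($\sigma_{t-1}^2 \ge 0$, equivalently $\nu_{t-1} \ge \mu_{t-1}^2$) applied to the same perfect-square decomposition of the denominator. The only difference is presentational: you isolate the recursion $\sigma_t^2 = (1-\beta_t)\sigma_{t-1}^2 + \beta_t(1-\beta_t)(Y_t-\mu_{t-1})^2$ as an explicit identity before dropping the non-negative term, whereas the paper carries out the identical algebra inline within a chain of fraction manipulations.
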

For instance, if $\beta_t = \beta =10^{-4}$ for all $t$, then the normalized target is guaranteed to be in $(-100, 100)$.  Note that Proposition \ref{bounded_normal} does not rely on any assumptions about the distribution of the targets.  This is an important result, because it implies we can bound the potential normalized errors before learning, without any prior knowledge about the actual targets we may observe.

It is an open question whether it is uniformly best to normalize by mean and variance.  In the appendix we discuss other normalization updates, based on percentiles and mini-batches, and derive correspondences between all of these.

\subsection{An equivalence for stochastic gradient descent}
We now step back and analyze the effect of the magnitude of the errors on the gradients when using regular SGD.  This analysis suggests a different normalization algorithm, which has an interesting correspondence to Pop-Art SGD.

We consider SGD updates for an unnormalized multi-layer function of form
$
f_{\th, \W, \b}(X) = \W \g_{\th}(X) + \b
$.
The update for the weight matrix $\W$ is
\[
\W_t = \W_{t-1} + \a_t {\bm \d}_t \g_{\th_t}(X_t)\tr \,,
\]
where ${\bm \d}_t = f_{\th, \W, \b}(X) - Y_t$ is gradient of the squared loss, which we here call the unnormalized error.
The magnitude of this update depends linearly on the magnitude of the error, which is appropriate when the inputs are normalized, because then the ideal scale of the weights depends linearly on the magnitude of the targets.\footnote{In general care should be taken that the inputs are well-behaved; this is exactly the point of recent work on input normalization \citep{Ioffe:2015,Desjardins:2015}.}

Now consider the SGD update to the parameters of $\g_{\th}$,
$
\th_t = \th_{t-1} - \a {\bm J}_t \W_{t-1}\tr {\bm \d}_t
$
where ${\bm J}_t = ( \nabla g_{\th,1}(X), \ldots, \nabla g_{\th,m}(X) )\tr$ is the Jacobian for $\g_{\th}$.
The magnitudes of both the weights $\W$ and the errors ${\bm \d}$ depend linearly on the magnitude of the targets.  This means that the magnitude of the update for $\th$ depends quadratically on the magnitude of the targets.  There is no compelling reason for these updates to depend at all on these magnitudes because the weights in the top layer already ensure appropriate scaling.  In other words, for each doubling of the magnitudes of the targets, the updates to the lower layers quadruple for no clear reason.

This analysis suggests an algorithmic solution, which seems to be novel in and of itself, in which we track the magnitudes of the targets in a separate parameter $\sigma_t$, and then multiply the updates for all lower layers with a factor $\sigma_t^{-2}$.  A more general version of this for matrix scalings is given in Algorithm 2.
We prove an interesting, and perhaps surprising, connection to the Pop-Art algorithm.

\begin{algorithm}[t]
\caption{\label{alg_norm_sgd} Normalized SGD}
\begin{algorithmic}
\STATE For a given differentiable function $\g_{\th}$, initialize $\th$.
\WHILE{learning}
\STATE Observe input $X$ and target $Y$
\STATE Use $Y$ to compute new scale $\S$
\STATE ${\bm h} \gets \g_{\th}(X)$ \hfill (store output of $\g_{\th}$)
\STATE ${\bm J} \gets (\nabla \g_{\th,1}(X), \ldots, \nabla \g_{\th,m}(X))\tr$ \hfill (compute Jacobian of $\g_{\th}$)
\STATE ${\bm \d} \gets \W {\bm h} + \b - Y$ \hfill (compute \emph{un}normalized error)
\STATE $\th \gets \th - \a {\bm J} (\S^{-1} \W)\tr \S^{-1} {\bm \d}$ \hfill (update $\th$ with scaled SGD)
\STATE $\W \gets \W - \a {\bm \d} {\bm g}\tr$ \hfill (update $\W$ with SGD)
\STATE $\b \gets \b - \a {\bm \d}$ \hfill (update $\b$ with SGD)
\ENDWHILE
\end{algorithmic}
\end{algorithm}

\begin{theorem}
Consider two functions defined by
\begin{align*}
f_{\th,\S,\m,\W,\b}(x)
& = \S ( \W \g_{\th}(x) + \b ) + \m &\text{ and}&&
f_{\th,\W,\b}(x)
& = \W \g_{\th}(x) + \b \,,
\end{align*}
where $\g_{\th}$ is the same differentiable function in both cases, and the functions are initialized identically, using $\S_0=\Id$ and $\m = {\bm 0}$, and the same initial $\th_0$, $\W_0$ and $\b_0$.
Consider updating the first function using Algorithm \ref{alg_popart_sgd} (Pop-Art SGD) and the second
using Algorithm \ref{alg_norm_sgd} (Normalized SGD).
Then, for any sequence of non-singular scales $\{\S_t\}_{t=1}^\infty$ and shifts $\{\m_t\}_{t=1}^\infty$, the algorithms are equivalent in the sense that 1) the sequences $\{\th_t\}_{t=0}^\infty$ are identical, 2) the outputs of the functions are identical, for any input.
\end{theorem}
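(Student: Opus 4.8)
The plan is to proceed by induction on $t$, carrying a set of invariants that tie the two parameter sets together at every iteration. Write the Pop-Art parameters (Algorithm \ref{alg_popart_sgd}) as $\th^{\mathrm p}, \W^{\mathrm p}, \b^{\mathrm p}$ together with the current scale and shift $\S_t, \m_t$, and the Normalized-SGD parameters (Algorithm \ref{alg_norm_sgd}) as $\th^{\mathrm n}, \W^{\mathrm n}, \b^{\mathrm n}$. Both algorithms are fed the same scale sequence $\{\S_t\}$ (and Pop-Art additionally the shifts $\{\m_t\}$), so the invariant I would maintain is
\[
\th^{\mathrm p}_t = \th^{\mathrm n}_t, \qquad \S_t \W^{\mathrm p}_t = \W^{\mathrm n}_t, \qquad \S_t \b^{\mathrm p}_t + \m_t = \b^{\mathrm n}_t \,.
\]
The last two equalities say precisely that the \emph{unnormalized} Pop-Art output $\S_t(\W^{\mathrm p}_t \g_{\th}(x) + \b^{\mathrm p}_t) + \m_t$ equals $\W^{\mathrm n}_t \g_{\th}(x) + \b^{\mathrm n}_t$, so once the invariant is established both conclusions follow at once: claim 1 is the first equality, claim 2 is output equivalence. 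The base case is immediate from the shared initialization $\S_0 = \Id$, $\m_0 = \bm 0$, and identical $\th_0, \W_0, \b_0$.

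For the inductive step I would first push the Pop-Art rescaling through, replacing $\W^{\mathrm p}$ by $\S_t^{-1}\S_{t-1}\W^{\mathrm p}$ and $\b^{\mathrm p}$ by $\S_t^{-1}(\S_{t-1}\b^{\mathrm p} + \m_{t-1} - \m_t)$ as in Proposition \ref{thm_pop}, and then substitute the inductive hypothesis. The crux is a single identity: the \emph{normalized} error of Pop-Art equals $\S_t^{-1}$ times the \emph{unnormalized} error of Normalized SGD, i.e. $\dn^{\mathrm p} = \S_t^{-1}\dn^{\mathrm n}$. This drops out by expanding $\dn^{\mathrm p} = \S_t^{-1}\S_{t-1}\W^{\mathrm p}\g_{\th}(X) + \S_t^{-1}(\S_{t-1}\b^{\mathrm p} + \m_{t-1} - \m_t) - \S_t^{-1}(Y - \m_t)$ and collapsing with $\S_{t-1}\W^{\mathrm p} = \W^{\mathrm n}$ and $\S_{t-1}\b^{\mathrm p} + \m_{t-1} = \b^{\mathrm n}$; the two shift terms $-\m_t$ and $+\m_t$ cancel. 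Because the shared $\th$ makes $\g_{\th}(X)$ and its Jacobian $\bm J$ identical in both runs, this is the only place the structure of the updates is used.

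With $\dn^{\mathrm p} = \S_t^{-1}\dn^{\mathrm n}$ in hand the three updates propagate the invariant mechanically. For $\th$: the rescaled weight satisfies $\S_t^{-1}\S_{t-1}\W^{\mathrm p} = \S_t^{-1}\W^{\mathrm n}$, so Pop-Art's update $-\a\bm J(\S_t^{-1}\W^{\mathrm n})\tr\S_t^{-1}\dn^{\mathrm n}$ matches the Normalized-SGD update term for term, preserving $\th^{\mathrm p} = \th^{\mathrm n}$. For $\W$: Pop-Art yields $\S_t^{-1}(\W^{\mathrm n} - \a\dn^{\mathrm n}\g_{\th}(X)\tr)$, so left-multiplying by $\S_t$ reproduces the Normalized-SGD weight, re-establishing $\S_t\W^{\mathrm p} = \W^{\mathrm n}$. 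For $\b$: an analogous collapse gives $\S_t\b^{\mathrm p} + \m_t = \b^{\mathrm n} - \a\dn^{\mathrm n}$, which is exactly the Normalized-SGD bias update.

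The only real obstacle is bookkeeping: in Pop-Art the weight and bias are each touched twice per iteration (once by the output-preserving rescale, once by SGD), and the $\th$-update uses the \emph{post-rescale} weight, so one must be scrupulous about which version of $\W^{\mathrm p}$ and which scale ($\S_{t-1}$ versus $\S_t$) enters each expression. Once the normalized-error identity is isolated, no genuine difficulty remains; the remaining manipulations are linear algebra, and the cancellations are forced by the rescale formulas of Proposition \ref{thm_pop}.
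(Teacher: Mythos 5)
Your proposal is correct and follows essentially the same route as the paper's own proof: an induction maintaining exactly the invariants $\th^{\mathrm p}_t = \th^{\mathrm n}_t$, $\W^{\mathrm p}_t = \S_t^{-1}\W^{\mathrm n}_t$, and $\b^{\mathrm p}_t = \S_t^{-1}(\b^{\mathrm n}_t - \m_t)$, with the relation between the normalized and unnormalized errors (your $\dn^{\mathrm p} = \S_t^{-1}\dn^{\mathrm n}$) doing the same work it does, implicitly, in the paper's per-update calculation. The bookkeeping you flag (post-rescale weights entering the $\th$-update, $\S_{t-1}$ versus $\S_t$) is handled identically there.
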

The proposition shows a duality between normalizing the targets, as in Algorithm \ref{alg_popart_sgd}, and changing the updates, as in Algorithm \ref{alg_norm_sgd}.  This allows us to gain more intuition about the algorithm.  In particular, in Algorithm \ref{alg_norm_sgd} the updates in top layer are not normalized, thereby allowing the last linear layer to adapt to the scale of the targets.  This is in contrast to other algorithms that have some flavor of adaptive normalization, such as RMSprop \citep{Tieleman:2012}, AdaGrad \citep{Duchi:2011}, and Adam \citep{Kingma:2015} that each component in the gradient by a square root of an empirical second moment of that component.  That said, these methods are complementary, and it is straightforward to combine Pop-Art with other optimization algorithms than SGD.

\section{Binary regression experiments}\label{sec_bin}
We first analyze the effect of rare events in online learning, when infrequently a much larger target is observed.  Such events can for instance occur when learning from noisy sensors that sometimes captures an actual signal, or when learning from sparse non-zero reinforcements.
We empirically compare three variants of SGD: without normalization, with normalization but without preserving outputs precisely (i.e., with `Art', but without `Pop'), and with Pop-Art.

The inputs are binary representations of integers drawn uniformly randomly between $0$ and $n=2^{10}-1$.  The desired outputs are the corresponding integer values.  Every 1000 samples, we present the binary representation of $2^{16}-1$ as input (i.e., all 16 inputs are 1) and as target $2^{16}-1 = 65,535$.
The approximating function is a fully connected neural network with 16 inputs, 3 hidden layers with 10 nodes per layer, and tanh internal activation functions.  This simple setup allows extensive sweeps over hyper-parameters, to avoid bias towards any algorithm by the way we tune these.  The step sizes $\a$ for SGD and $\beta$ for the normalization are tuned by a grid search over $\{10^{-5}, 10^{-4.5}, \ldots, 10^{-1}, 10^{-0.5}, 1\}$.

\begin{figure}
\begin{subfigure}{.48\textwidth}
\begin{center}
\includegraphics[width=.99\linewidth]{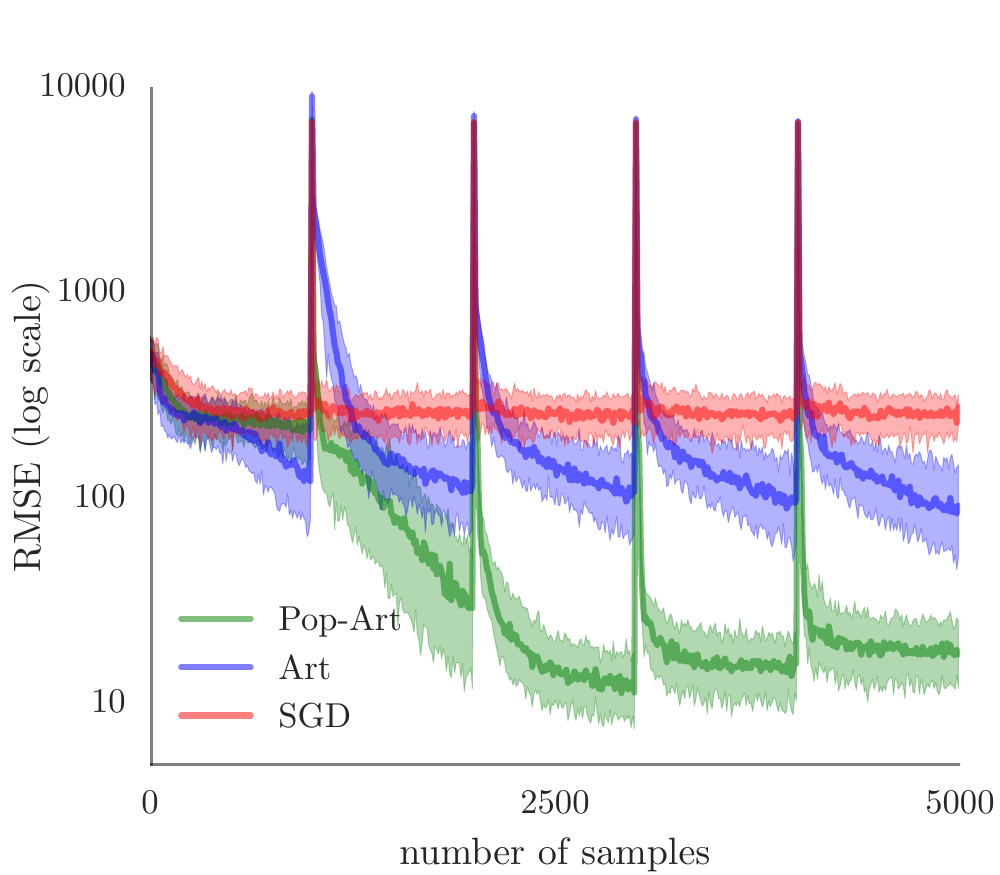}
\caption*{Fig. 1a. \label{binary_popart} Median RMSE on binary regression for SGD without normalization (\textcolor{red}{\textbf{red}}), with normalization but without preserving outputs (\textcolor{blue}{\textbf{blue}}, labeled `Art'), and with Pop-Art (\textcolor{ForestGreen}{\textbf{green}}). Shaded 10--90 percentiles.}
\end{center}
\end{subfigure}%
\begin{subfigure}{.02\textwidth}
\ 
\end{subfigure}
\begin{subfigure}{.48\textwidth}
\begin{center}
\includegraphics[width=.99\linewidth]{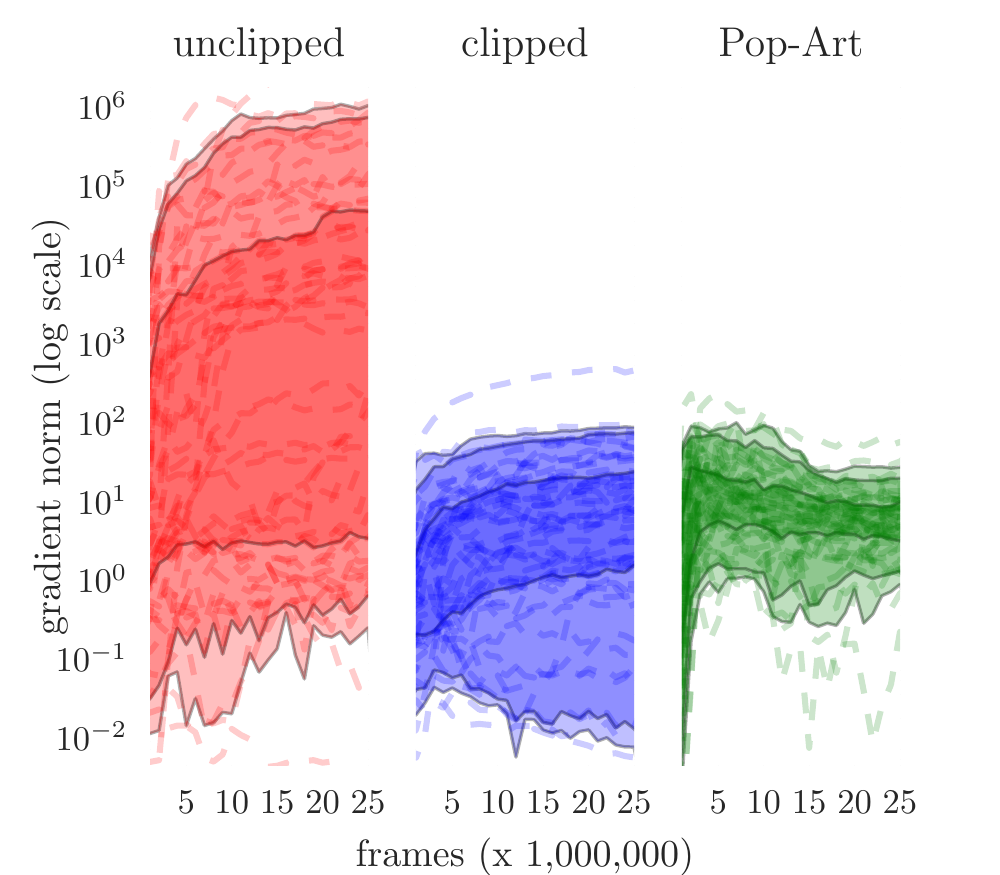}
\caption*{Fig. 1b. \label{fig_norms} $\ell^2$ gradient norms for DQN during learning on 57 Atari games with actual unclipped rewards (left, \textcolor{red}{\textbf{red}}), clipped rewards (middle, \textcolor{blue}{\textbf{blue}}), and using Pop-Art (right, \textcolor{ForestGreen}{\textbf{green}}) instead of clipping.  Shaded areas correspond to 95\%, 90\% and 50\% of games.}
\end{center}
\end{subfigure}
\end{figure}
Figure 1a shows the root mean squared error (RMSE, log scale) for each of 5000 samples, before updating the function (so this is a test error, not a train error).  The solid line is the median of 50 repetitions, and shaded region covers the 10th to 90th percentiles.  The plotted results correspond to the best hyper-parameters according to the overall RMSE (i.e., area under the curve).  The lines are slightly smoothed by averaging over each 10 consecutive samples.

SGD favors a relatively small step size ($\a = 10^{-3.5}$) to avoid harmful large updates, but this slows learning on the smaller updates; the error curve is almost flat in between spikes.  SGD with adaptive normalization (labeled `Art') can use a larger step size ($\a = 10^{-2.5}$) and therefore learns faster, but has high error after the spikes because the changing normalization also changes the outputs of the smaller inputs, increasing the errors on these.  In comparison, Pop-Art performs much better.  It prefers the same step size as Art ($\a = 10^{-2.5}$), but Pop-Art can exploit a much faster rate for the statistics (best performance with $\beta = 10^{-0.5}$ for Pop-Art and $\beta = 10^{-4}$ for Art).  The faster tracking of statistics protects Pop-Art from the large spikes, while the output preservation avoids invalidating the outputs for smaller targets.
We ran experiments with RMSprop but left these out of the figure as the results were very similar to SGD.

\section{Atari 2600 experiments}\label{sec_Atari}
An important motivation for this work is reinforcement learning with non-linear function approximators such as neural networks (sometimes called \emph{deep reinforcement learning}). The goal is to predict and optimize action values defined as the expected sum of future rewards.  These rewards can differ arbitrarily from one domain to the next, and non-zero rewards can be sparse.  As a result, the action values can span a varied and wide range which is often unknown before learning commences.

\citet{Mnih:2015} combined Q-learning with a deep neural network in an algorithm called DQN, which impressively learned to play many games using a single set of hyper-parameters.
However, as discussed above, to handle the different reward magnitudes with a single system all rewards were clipped to the interval $[-1,1]$.  This is harmless in some games, such as Pong where no reward is ever higher than 1 or lower than $-1$, but it is not satisfactory as this heuristic introduces specific domain knowledge that optimizing reward frequencies is approximately is useful as optimizing the total score.  However, the clipping makes the DQN algorithm blind to differences between certain actions, such as the difference in reward between eating a ghost (reward $>= 100$) and eating a pellet (reward $= 25$) in Ms. Pac-Man.
We hypothesize that 1) overall performance decreases when we turn off clipping, because it is not possible to tune a step size that works on many games, 2) that we can regain much of the lost performance by with Pop-Art.  The goal is not to improve state-of-the-art performance, but to remove the domain-dependent heuristic that is induced by the clipping of the rewards, thereby uncovering the true rewards.

We ran the Double DQN algorithm \citep{vanHasselt:2016} in three versions: without changes, without clipping both rewards and temporal difference errors, and without clipping but additionally using Pop-Art.  The targets are the cumulation of a reward and the discounted value at the next state:
\begin{equation}\label{ddqn}
Y_t = R_{t+1} + \gamma Q(S_t,\argmax_a Q(S_t,a;\th) ; \th^-) \,,
\end{equation}
where $Q(s,a;\th)$ is the estimated action value of action $a$ in state $s$ according to current parameters $\th$, and where $\th^-$ is a more stable periodic copy of these parameters \citep[cf.][for more details]{Mnih:2015,vanHasselt:2016}. This is a form of Double Q-learning \citep{vanHasselt:2010, vanHasselt:2011}.
We roughly tuned the main step size and the step size for the normalization to $10^{-4}$.  It is not straightforward to tune the unclipped version, for reasons that will become clear soon. 

Figure 1b shows $\ell^2$ norm of the gradient of Double DQN during learning as a function of number of training steps.  The left plot corresponds to no reward clipping, middle to clipping (as per original DQN and Double DQN), and right to using Pop-Art instead of clipping.  Each faint dashed lines corresponds to the median norms (where the median is taken over time) on one game.  The shaded areas correspond to $50\%$, $90\%$, and $95\%$ of games.

Without clipping the rewards, Pop-Art produces a much narrower band within which the gradients fall.  Across games, $95\%$ of median norms range over less than two orders of magnitude (roughly between 1 and 20), compared to almost four orders of magnitude for clipped Double DQN, and more than six orders of magnitude for unclipped Double DQN without Pop-Art.  The wide range for the latter shows why it is impossible to find a suitable step size with neither clipping nor Pop-Art: the updates are either far too small on some games or far too large on others.

\begin{figure}[t]
\begin{center}
\includegraphics[width=\textwidth]{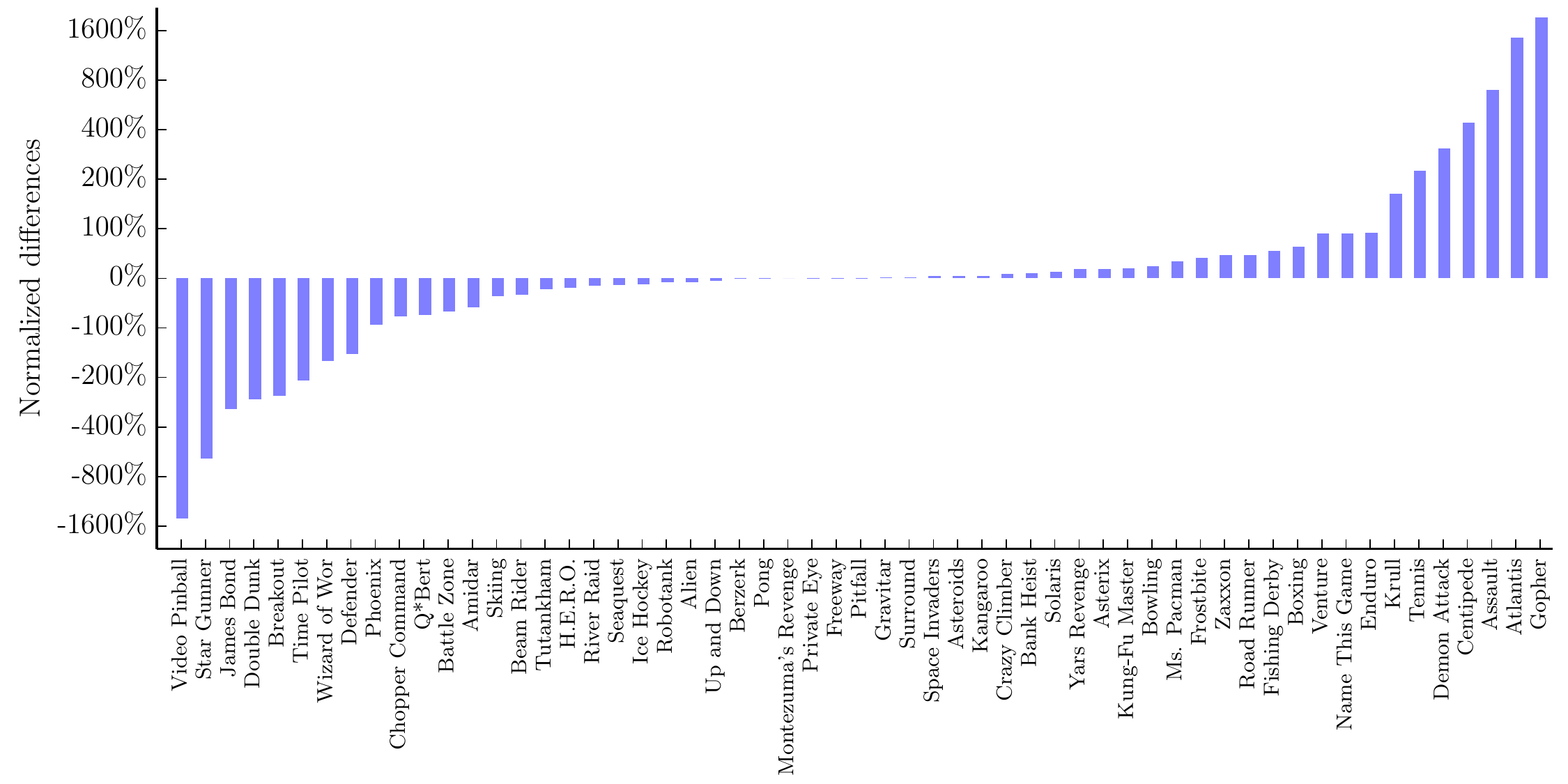}
\caption{\label{fig_Atari} Differences between normalized scores for Double DQN with and without Pop-Art on 57 Atari games.}
\end{center}
\end{figure}

After 200M frames, we evaluated the actual scores of the best performing agent in each game on 100 episodes of up to 30 minutes of play, and then normalized by human and random scores as described by \citet{Mnih:2015}.  Figure~\ref{fig_Atari} shows the differences in normalized scores between (clipped) Double DQN and Double DQN with Pop-Art.

The main eye-catching result is that the distribution in performance drastically changed.  On some games (e.g., Gopher, Centipede) we observe dramatic improvements, while on other games (e.g., Video Pinball, Star Gunner) we see  a substantial decrease.
For instance, in Ms. Pac-Man the clipped Double DQN agent does not care more about ghosts than pellets, but Double DQN with Pop-Art learns to actively hunt ghosts, resulting in higher scores.  Especially remarkable is the improved performance on games like Centipede and Gopher, but also notable is a game like Frostbite which went from below 50\% to a near-human performance level. Raw scores can be found in the appendix.

Some games fare worse with unclipped rewards because it changes the nature of the problem.  For instance, in Time Pilot the Pop-Art agent learns to quickly shoot a mothership to advance to a next level of the game, obtaining many points in the process. The clipped agent instead shoots at anything that moves, ignoring the mothership.\footnote{A video is included in the supplementary material.} However, in the long run in this game more points are scored with the safer and more homogeneous strategy of the clipped agent.  One reason for the disconnect between the seemingly qualitatively good behavior combined with lower scores is that the agents are fairly myopic: both use a discount factor of $\gamma = 0.99$, and therefore only optimize rewards that happen within a dozen or so seconds into the future.
 
On the whole, the results show that with Pop-Art we can successfully remove the clipping heuristic that has been present in all prior DQN variants, while retaining overall performance levels. Double DQN with Pop-Art performs slightly better than Double DQN with clipped rewards: on 32 out of 57 games performance is at least as good as clipped Double DQN and the median (+0.4\%) and mean (+34\%) differences are positive.

\section{Discussion}
We have demonstrated that Pop-Art can be used to adapt to different and non-stationary target magnitudes.  This problem was perhaps not previously commonly appreciated, potentially because in deep learning it is common to tune or normalize a priori, using an existing data set. This is not as straightforward in reinforcement learning when the policy and the corresponding values may repeatedly change over time.  This makes Pop-Art a promising tool for deep reinforcement learning, although it is not specific to this setting.

We saw that Pop-Art can successfully replace the clipping of rewards as done in DQN to handle the various magnitudes of the targets used in the Q-learning update.  Now that the true problem is exposed to the learning algorithm we can hope to make further progress, for instance by improving the exploration \citep{Osband:2016}, which can now be informed about the true unclipped rewards.

\small
\bibliography{../../all}
\bibliographystyle{abbrvnat}
\normalsize

\section*{Appendix}
In this appendix, we introduce and analyze several extensions and variations, including normalizing based on percentiles or minibatches.  Additionally, we prove all propositions in the main text and the appendix.

\subsection*{Experiment setup}
For the experiments described in Section 4 in the main paper, we closely followed the setup described in \citet{Mnih:2015} and \citet{vanHasselt:2016}.  In particular, the Double DQN algorithm is identical to that described by \citeauthor{vanHasselt:2016}\  The shown results were obtained by running the trained agent for 30 minutes of simulated play (or 108,000 frames).  This was repeated 100 times, where diversity over different runs was ensured by a small probability of exploration on each step ($\epsilon$-greedy exploration with $\epsilon=0.01$), as well as by performing up to 30 `no-op' actions, as also used and described by \citeauthor{Mnih:2015}\  In summary, the evaluation setup was the same as used by \citeauthor{Mnih:2015}, except that we allowed more evaluation time per game (30 minutes instead of 5 minutes), as also used by \citet{Wang:2016}.

The results in Figure 2 were obtained by normalizing the raw scores by first subtracting the score by a random agent, and then dividing by the absolute difference between human and random agents, such that
\[
\text{score}^\text{normalized} \equiv \frac{\text{score}^\text{agent} - \text{score}^\text{random}}{|\text{score}^\text{human} - \text{score}^\text{random}|} \,.
\]
The raw scores are given below, in Table \ref{raw_scores}.

\subsection*{Generalizing normalization by variance}
We can change the variance of the normalized targets to influence the magnitudes of the updates.  For a desired standard deviation of $s > 0$, we can use
\[
\sigma_t = \frac{ \sqrt{ \nu_t - \mu_t^2 } }{ s }\,,
\]
with the updates for $\nu_t$ and $\mu_t$ as normal.  It is straightforward to show that then a generalization of Proposition 2 holds with a bound of
\[
- s\sqrt{\frac{1 - \beta_t}{\beta_t}} \le \frac{Y_t - \mu_t}{\sigma_t} \le s\sqrt{\frac{1 - \beta_t}{\beta_t}} \,.
\]
This additional parameter is for instance useful when we desire fast tracking in non-stationary problems.  We then want a large step size $\alpha$, but without risking overly large updates.

The new parameter $s$ may seem superfluous because increasing the normalization step size $\beta$ also reduces the hard bounds on the normalized targets.  However, $\beta$ additionally influences the distribution of the normalized targets.  The histograms in the left-most plot in Figure \ref{fig_hist} show what happens when we try to limit the magnitudes using only $\beta$. The red histogram shows normalized targets where the unnormalized targets come from a normal distribution, shown in blue.  The normalized targets are contained in $[-1,1]$, but the distribution is very non-normal even though the actual targets are normal.  Conversely, the red histogram in the middle plot shows that the distribution remains approximately normal if we instead use $s$ to reduce the magnitudes.  The right plot shows the effect on the variance of normalized targets for either approach.  When we change $\beta$ while keeping $s=1$ fixed, the variance of the normalized targets can drop far below the desired variance of one (magenta curve). When we use change $s$ while keeping $\beta = 0.01$ fixed, the variance remains predictably at approximately $s$ (black line).  The difference in behavior of the resulting normalization demonstrates that $s$ gives us a potentially useful additional degree of freedom.
\definecolor{darkgreen}{rgb}{0,0.7,0}
\definecolor{purple}{rgb}{0.7,0,0.7}
\begin{figure*}
\centering
\includegraphics[width=\textwidth]{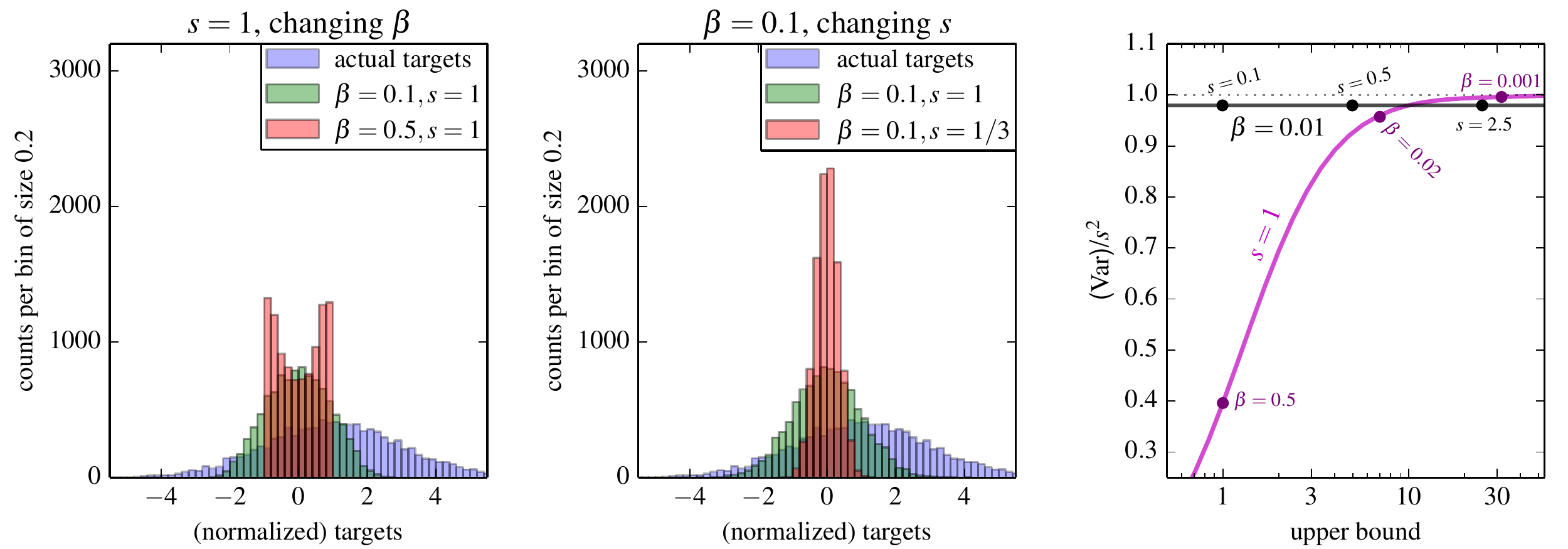}
\caption{\label{fig_hist} The \textbf{left} plot shows histograms for 10,000 normally distributed targets with mean $1$ and standard deviation $2$ (\textbf{\textcolor{blue}{blue}}) and for normalized targets for $\beta=0.1$ (\textbf{\textcolor{darkgreen}{green}}) and $\beta=0.5$ (\textbf{\textcolor{red}{red}}).
The \textbf{middle} plot shows the same histograms, except that the histogram for $\beta=0.5$ and $s=1$ is replaced by a histogram for $\beta=0.1$ and $s=1/3$ (\textbf{\textcolor{red}{red}}).
The \textbf{right} plot shows the variance of the normalized targets as a function of the upper bound $s \sqrt{(1-\beta)/\beta}$ when we either change $\beta$ while keeping $s=1$ fixed (\textbf{\textcolor{purple}{magenta}} curve) or we change $s$ while keeping $\beta=0.01$ fixed (\textbf{black} straight line).
}
\end{figure*}

Sometimes, we can simply roll the additional scaling $s$ into the step size, such that without loss of generality we can use $s=1$ and decrease the step size to avoid overly large updates.  However, sometimes it is easier to separate the magnitude of the targets, as influenced by $s$, from the magnitude of the updates, for instance when using an adaptive step-size algorithm.  In addition, the introduction of an explicit scaling $s$ allows us to make some interesting connections to normalization by percentiles, in the next section.

\subsection*{Adaptive normalization by percentiles}
Instead of normalizing by mean and variance, we can normalize such that a given ratio $p$ of normalized targets is inside the predetermined interval. The per-output objective is then
\[
\text{P}\left( \frac{Y - \mu}{\sigma} \in [-1,1] \right) = p \,.
\]
For normally distributed targets, there is a direct correspondence to normalizing by means and variance.
\begin{theorem}\label{p_to_v}
If scalar targets $\{Y_t\}_{t=1}^\infty$ are distributed according to a normal distribution with arbitrary finite mean and variance, then the objective $\text{P}( ( Y - \mu )/\sigma \in [-1,1] ) = p$ is equivalent to the joint objective $\E{ Y - \mu } = 0$ and $\E{ \sigma^{-2} ( Y - \mu )^2 } = s^2$ with
\[
p
~~=~~ \erf\left( \frac{1}{\sqrt{2}s} \right) \,.
\]
\end{theorem}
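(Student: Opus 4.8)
The plan is to solve the variance-based objective explicitly, substitute the resulting scale and shift into the percentile probability, and reduce everything to a single standard-Gaussian tail computation. Write $m \equiv \E{Y}$ and $v \equiv \Var{Y}$ for the (finite) mean and variance of the normal targets. The shift condition $\E{Y - \mu} = 0$ gives $\mu = m$ at once, and then $\E{\sigma^{-2}(Y-\mu)^2} = \sigma^{-2}\E{(Y-m)^2} = \sigma^{-2} v$, so the scale condition $\E{\sigma^{-2}(Y-\mu)^2} = s^2$ forces $\sigma = \sqrt{v}/s$. Thus the joint objective pins down the pair $(\mu,\sigma) = (m,\, \sqrt{v}/s)$, and the remaining work is to verify that this same pair realizes the prescribed percentile.

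Next I would standardize. Since $Y$ is normal with mean $m$ and variance $v$, the variable $Z \equiv (Y-m)/\sqrt{v}$ is standard normal, and with the values above $(Y-\mu)/\sigma = s\,(Y-m)/\sqrt{v} = sZ$. Hence the percentile event $(Y-\mu)/\sigma \in [-1,1]$ is exactly $|Z| \le 1/s$, and I only need to evaluate $\P{|Z| \le 1/s}$ for a standard Gaussian. Using the identity $\Phi(x) = \tfrac12\bigl(1 + \erf(x/\sqrt2)\bigr)$ relating the normal CDF to the error function, one gets $\P{-a \le Z \le a} = 2\Phi(a) - 1 = \erf(a/\sqrt2)$; setting $a = 1/s$ yields $p = \erf\!\left(\tfrac{1}{\sqrt2\,s}\right)$, exactly as claimed. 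This is the one genuine computation in the argument.

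For the claimed \emph{equivalence} I would argue both directions. The forward direction is the display above: the unique solution of the joint objective satisfies the percentile objective with $p = \erf(1/(\sqrt2 s))$. Conversely, because $\erf$ is strictly increasing on $[0,\infty)$, the relation $p = \erf(1/(\sqrt2 s))$ inverts uniquely for $s$, and the percentile condition combined with centering determines $\sigma$ uniquely, recovering $\sigma = \sqrt{v}/s$; so the two objectives are satisfied by the same $(\mu,\sigma)$. The only point needing care — rather than a real obstacle — is the status of the centering condition: the percentile equation alone is a single equation in the two unknowns $\mu,\sigma$ and does not by itself fix $\mu$, so I would state the equivalence as holding under the shared shift requirement $\E{Y-\mu} = 0$ (equivalently, invoking the symmetry of the normal about its mean, which makes the symmetric interval $[-1,1]$ naturally centered). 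Once normality is used to pass from $Y$ to the standard variable $Z$, the standardization and the $\erf$ evaluation are entirely routine.
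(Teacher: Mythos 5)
Your proposal is correct and takes essentially the same route as the paper's own proof: both reduce the claim to the fact that an affine transformation of a normal variable is normal, and then evaluate the probability of the symmetric interval $[-1,1]$ via the identity $\Phi(x) = \tfrac{1}{2}\bigl(1 + \erf(x/\sqrt{2})\bigr)$, yielding $p = \erf\bigl(1/(\sqrt{2}\,s)\bigr)$. Your added care about the converse direction --- noting that the percentile equation alone does not pin down $\mu$, so the equivalence should be read under the shared centering condition (or the symmetry of the normal) --- is a point the paper's proof passes over silently, but it refines rather than changes the argument.
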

For example, percentiles of $p=0.99$ and $p=0.95$ correspond to $s \approx 0.4$ and $s \approx 0.5$, respecticely.  Conversely, $s=1$ corresponds to $p\approx 0.68$.
The fact only applies when the targets are normal. For other distributions the two forms of normalization differ even in terms of their objectives.

We now discuss a concrete algorithm to obtain normalization by percentiles.
Let $Y^{(n)}_t$ denote order statistics of the targets up to time $t$,\footnote{For non-integer $x$ we can define $Y^{(x)}$ by either rounding $x$ to an integer or, perhaps more appropriately, by linear interpolation between the values for the nearest integers.} such that $Y^{(1)}_t = \min_i \{Y_i\}_{i=1}^t$, $Y^{(t)}_t = \max_i \{Y_i\}_{i=1}^t$, and $Y^{((t+1)/2)}_t = \median_i \{Y_i\}_{i=1}^t$.
For notational simplicity, define $n^+ \equiv \frac{t+1}{2}+p\frac{t-1}{2}$ and $n^- \equiv \frac{t+1}{2}-p\frac{t-1}{2}$. Then, for data up to time $t$, the goal is
\begin{align*}
\frac{Y^{(n^+)}_t - \mu_t}{\sigma_t} & = -1 \,,
&\text{and}&&
\frac{Y^{(n^-)}_t - \mu_t}{\sigma_t} = 1 \,.
\end{align*}
Solving for $\sigma_t$ and $\mu_t$ gives
\begin{align*}
\mu_t & = \frac{1}{2} \left( Y^{(n^+)}_t + Y^{(n^-)}_t \right) \,,
&\text{and} &&
\sigma_t & = \frac{1}{2} \left( Y^{(n^+)}_t - Y^{(n^-)}_t \right) \,.
\end{align*}
In the special case where $p=1$ we get $\mu_t = \frac{1}{2}( \max_i Y_i + \min_i Y_i)$ and $\sigma_t = \frac{1}{2} ( \max_i Y_i - \min_i Y_i )$.  We are then guaranteed that all normalized targets fall in $[-1,1]$, but this could result in an overly conservative normalization that is sensitive to outliers and may reduce the overall magnitude of the updates too far.  In other words, learning will then be safe in the sense that no updates will be too big, but it may be slow because many updates may be very small.  In general it is probably typically better to use a ratio $p<1$.

Exact order statistics are hard to compute online, because we would need to store all previous targets.  To obtain more memory-efficient online updates for percentiles we can store two values $\ymin_t$ and $\ymax_t$, which should eventually have the property that a proportion of $(1-p)/2$ values is larger than $\ymax_t$ and a proportion of $(1-p)/2$ values is smaller than $\ymin_t$, such that
\begin{equation}\label{percentile_objective}
\P{ Y > \ymax_t } = \P{ Y < \ymin_t } = (1 - p)/2 \,.
\end{equation}
This can be achieved asymptotically by updating $\ymin_t$ and $\ymax_t$ according to
\begin{align}\label{eq_percent_stats}
\ymax_t & =
\ymax_{t-1} + \beta_t\!\left(\!\mathcal{I}(Y_t > \ymax_{t-1}) - \frac{1-p}{2}\!\right)
&
\text{and}
\\
\ymin_t & =
\ymin_{t-1} - \beta_t\!\left(\!\mathcal{I}(Y_t < \ymin_{t-1}) - \frac{1-p}{2}\right)\!\,,
\end{align}
where the indicator function $\mathcal{I}(\cdot)$ is equal to one when its argument is true and equal to zero otherwise.
\begin{theorem}
If $\sum_{t=1}^\infty \beta_t$ and $\sum_{t=1}^\infty \beta_t^2$, and the distribution of targets is stationary, then the updates in \eqref{eq_percent_stats} converge to values such that \eqref{percentile_objective} holds.
\end{theorem}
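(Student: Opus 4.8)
The plan is to recognize both recursions in \eqref{eq_percent_stats} as instances of the classical Robbins--Monro stochastic approximation scheme and to invoke its convergence theorem after checking the hypotheses. First I would note that the two updates decouple: $\ymax_t$ depends only on $\ymax_{t-1}$ and the fresh sample $Y_t$, and symmetrically for $\ymin_t$, so it suffices to analyze a single scalar recursion $z_t = z_{t-1} + \beta_t \xi_t$ and then apply the argument to each. I read the stated hypothesis as the usual Robbins--Monro requirements $\sum_t \beta_t = \infty$ and $\sum_t \beta_t^2 < \infty$, and I take the targets to be i.i.d.\ draws from the stationary distribution, with CDF $F(y) = \P{Y \le y}$.

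For the $\ymax$ recursion I would define the mean field $g(y) \equiv \E{\I{Y > y} - \tfrac{1-p}{2}} = 1 - F(y) - \tfrac{1-p}{2}$ and split the increment as $\xi_t = g(\ymax_{t-1}) + M_t$, where $M_t = \I{Y_t > \ymax_{t-1}} - \P{Y > \ymax_{t-1}}$ is, by independence of $Y_t$ from the past, a martingale difference with respect to the natural filtration. The zero of $g$ is exactly the condition $\P{Y > y^\star} = \tfrac{1-p}{2}$ of \eqref{percentile_objective}, and the same bookkeeping for $\ymin$ (the extra minus sign in its update supplies the correct drift direction) gives the root $\P{Y < y^\star} = \tfrac{1-p}{2}$.

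The two structural facts to verify are then standard: $g$ is non-increasing because $F$ is non-decreasing, so $(z - y^\star) g(z) \le 0$ and the drift always points toward the root, which also keeps the iterates bounded; and the noise is uniformly bounded, $|M_t| \le 1$, so its conditional second moment is bounded and its aggregate contribution is controlled by $\sum_t \beta_t^2 < \infty$. Feeding these into the Robbins--Monro (or Dvoretzky) theorem yields $z_t \to y^\star$ almost surely, so both statistics converge to values at which \eqref{percentile_objective} holds.

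The main obstacle is that the root of $g$ need not be unique or exact: if $F$ is flat at the level $1 - \tfrac{1-p}{2}$ the solution is an interval, and if $F$ jumps there (a discrete target distribution) no $y$ attains the equality at all, which is precisely why the claim is only that \eqref{percentile_objective} is achieved \emph{asymptotically}. I would first prove the clean statement under the assumption that $F$ is continuous and strictly increasing at the relevant quantile, so that $y^\star$ is an isolated sign-changing root; the general case then follows by the ODE method, showing that $z_t$ tracks the limiting flow $\dot z = g(z)$ and hence approaches the solution set of \eqref{percentile_objective} rather than a single point.
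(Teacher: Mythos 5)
Your proposal follows essentially the same route as the paper's own proof: the paper likewise identifies $\P{ Y > \ymax_t } = (1-p)/2$ as the fixed point of the update, observes that the expected drift points toward it and that the noise variance is finite, and then invokes the Robbins--Monro step-size conditions with a pointer to the stochastic approximation literature. Your write-up is in fact more detailed than the paper's sketch --- the explicit mean-field and martingale-difference decomposition, the monotonicity argument for the drift, and especially your caveat about flat or discontinuous CDFs (where the root is an interval or fails to exist) make rigorous what the paper's informal proof silently assumes.
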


If the step size $\beta_t$ is too small it will take long for the updates to converge to appropriate values.  In practice, it might be better to let the magnitude of the steps depend on the actual errors, such that the update takes the form of an asymmetrical least-squares update \citep{Newey:1987,Efron:1991}.

\subsection*{Online learning with minibatches}
Online normalization by mean and variance with minibatches $\{ Y_{t,1}, \ldots, Y_{t,B} \}$ of size $B$ can be achieved by using the updates
\begin{align*}
& \mu_t = (1 - \beta_t) \mu_{t-1} + \beta_t \frac{1}{B} \sum_{b=1}^B Y_{t,b} \;,\;\text{and} & \\
& \sigma_t = \frac{\sqrt{\nu_t - \mu_t^2}}{s} \;,\;\text{where}\\
& \nu_t = (1 - \beta_t) \nu_{t-1} + \beta_t \frac{1}{B} \sum_{b=1}^B Y_{t,b}^2 \,.
\end{align*}
Another interesting possibility is to update $\ymin_t$ and $\ymax_t$ towards the extremes of the minibatch such that
\begin{align}\label{percentile_minibatch_stats}
\ymin_t & = (1 - \beta_t) \ymin_{t-1} + \beta_t \min_b Y_{t,b} \,,
\quad\text{and} \\
\ymax_t & = (1 - \beta_t) \ymax_{t-1} + \beta_t \max_b Y_{t,b} \,,\notag
\end{align}
and then use
\begin{align*}
\mu_t & = \frac{1}{2}( \ymax_t + \ymin_t) \,,
& \quad\text{and}
& & 
\sigma_t & = \frac{1}{2}( \ymax_t - \ymin_t ) \,.
\end{align*}
The statistics of this normalization depend on the size of the minibatches, and there is an interesting correspondence to normalization by percentiles.
\begin{theorem}\label{B_to_p}
Consider minibatches $\{ \{ Y_{t,1}, \ldots, Y_{t,B} \}\}_{t=1}^\infty$ of size $B \ge 2$ whose elements are drawn i.i.d. from a uniform distribution with support on $[a,b]$. If $\sum_t \beta_t = \infty$ and $\sum_t \beta_t^2 < \infty$, then in the limit the updates~\eqref{percentile_minibatch_stats} converge to values such that \eqref{percentile_objective} holds, with $p = (B-1)/(B+1)$.
\end{theorem}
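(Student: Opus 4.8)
The plan is to recognize the updates \eqref{percentile_minibatch_stats} as a pair of linear stochastic approximation recursions, identify their fixed points, and then translate those fixed points into the percentile objective \eqref{percentile_objective}. Writing $M_t \equiv \max_b Y_{t,b}$ and $m_t \equiv \min_b Y_{t,b}$, the update for $\ymax$ reads $\ymax_t = \ymax_{t-1} + \beta_t ( M_t - \ymax_{t-1})$, and analogously for $\ymin$. Because the minibatches are i.i.d.\ across $t$, the $M_t$ (resp.\ $m_t$) are i.i.d.\ with a common mean and, thanks to the compact support $[a,b]$, bounded variance. The conditional expected increment given $\ymax_{t-1}$ is $\beta_t ( \E{M_t} - \ymax_{t-1})$, which vanishes precisely at the candidate fixed point $\ymax_\infty = \E{M_t}$, and symmetrically $\ymin_\infty = \E{m_t}$.

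First I would establish convergence. Since each recursion is \emph{linear} in its previous iterate with additive, independent, zero-mean noise $\xi_t \equiv M_t - \E{M_t}$, setting $e_t \equiv \ymax_t - \E{M_t}$ gives $e_t = (1-\beta_t) e_{t-1} + \beta_t \xi_t$. Squaring and taking expectations, the cross term drops out because $\xi_t$ is independent of $e_{t-1}$ and mean zero, so $\E{e_t^2} = (1-\beta_t)^2 \E{e_{t-1}^2} + \beta_t^2 \Var{M_t}$. The stated hypotheses $\sum_t \beta_t = \infty$ and $\sum_t \beta_t^2 < \infty$ are exactly the Robbins--Monro conditions, and a standard lemma then yields $\E{e_t^2} \to 0$, hence $\ymax_t \to \E{M_t}$; the identical argument gives $\ymin_t \to \E{m_t}$.

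Next I would compute these limits explicitly. For $B$ i.i.d.\ draws on $[a,b]$ the expected extreme order statistics are $\E{M_t} = a + \frac{B}{B+1}(b-a)$ and $\E{m_t} = a + \frac{1}{B+1}(b-a)$, a routine order-statistics calculation. Substituting into the tail probabilities of a uniform $Y$ gives $\P{Y > \ymax_\infty} = (b - \ymax_\infty)/(b-a) = 1/(B+1)$ and, symmetrically, $\P{Y < \ymin_\infty} = 1/(B+1)$. Equating this common value to $(1-p)/2$, as demanded by \eqref{percentile_objective}, and solving yields $1 - p = 2/(B+1)$, i.e.\ $p = (B-1)/(B+1)$, which is the claimed correspondence.

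The main obstacle I anticipate is the convergence step rather than the algebra. One must verify that the linear stochastic approximation actually converges to its fixed point under only the two summability conditions on $\beta_t$; this is comparatively mild here because the recursion is linear and the noise has bounded variance, so the mean-square recursion above closes cleanly and neither a projection nor a boundedness-of-iterates argument is needed. The one subtlety to keep in mind is that the $\ymin$ and $\ymax$ recursions share the same minibatch and are therefore correlated, but since each converges to its own mean independently, this correlation is irrelevant to the limiting values and hence to the percentile objective.
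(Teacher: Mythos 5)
Your proposal is correct and follows essentially the same route as the paper's proof: both identify the limits of $\ymin_t$ and $\ymax_t$ as the expected extreme order statistics of $B$ i.i.d.\ uniform draws, compute $\E{\max_b Y_{t,b}} = a + \tfrac{B}{B+1}(b-a)$ (and symmetrically for the minimum), and convert these into tail probabilities of $\tfrac{1}{B+1}$ on each side, yielding $p = (B-1)/(B+1)$. The only difference is that where the paper merely asserts convergence from the step-size conditions by appeal to the stochastic-approximation literature, you supply an explicit mean-square argument for the linear recursion; this is a tightening of the same approach rather than a different one.
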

This fact connects the online minibatch updates \eqref{percentile_minibatch_stats} to normalization by percentiles.  For instance, a minibatch size of $B=20$ would correspond roughly to online percentile updates with $p=19/21 \approx 0.9$ and, by Proposition \ref{p_to_v}, to a normalization by mean and variance with a $s \approx 0.6$. These different normalizations are not strictly equivalent, but may behave similarly in practice.

Proposition \ref{B_to_p} quantifies an interesting correspondence between minibatch updates and normalizing by percentiles.  Although the fact as stated holds only for uniform targets, the proportion of normalized targets in the interval $[-1,1]$ more generally becomes larger when we increase the minibatch size, just as when we increase $p$ or decrease $s$, potentially resulting in better robustness to outliers at the possible expense of slower learning.

\subsection*{A note on initialization}
When using constant step sizes it is useful to be aware of the start of learning, to trust the data rather than arbitrary initial values.  This can be done by using a step size as defined in the following fact.
\begin{theorem}
Consider a recency-weighted running average $\bar z_t$ updated from a stream of data $\{Z_t\}_{t=1}^\infty$ using $\bar z_t = 
(1 - \beta_t)\bar z_{t-1} + \beta_t Z_t$, 
with
$\beta_t$ defined by
\begin{equation}\label{better_beta}
\beta_t = \beta(1 - (1 - \beta)^t)^{-1} \,.
\end{equation}
Then 1) the relative weights of the data in $Z_t$ are the same as when using a constant step size $\beta$, and 2) the estimate $\bar z_t$ does not depend on the initial value $\bar z_0$.
\end{theorem}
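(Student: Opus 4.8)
The plan is to unroll the linear recursion $\bar z_t = (1-\beta_t)\bar z_{t-1} + \beta_t Z_t$ into an explicit weighted sum and then read off both claims from the resulting weights. Solving the recursion gives
\[
\bar z_t = \left(\prod_{i=1}^t (1-\beta_i)\right) \bar z_0 + \sum_{j=1}^t \beta_j \left(\prod_{i=j+1}^t (1-\beta_i)\right) Z_j \,,
\]
so the weight attached to each datum $Z_j$ is $w_j^{(t)} = \beta_j \prod_{i=j+1}^t (1-\beta_i)$, while the weight on the initialization is $\prod_{i=1}^t(1-\beta_i)$. Both claims reduce to evaluating these products for the specific choice \eqref{better_beta}.

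First I would rewrite the individual factors. Writing $c \equiv 1-\beta$, the definition \eqref{better_beta} reads $\beta_i = (1-c)/(1-c^i)$, so
\[
1 - \beta_i = \frac{(1-c^i) - (1-c)}{1-c^i} = \frac{c\,(1-c^{i-1})}{1-c^i} \,.
\]
The key step is that the fractions $(1-c^{i-1})/(1-c^i)$ telescope: taking the product from $i=j+1$ to $i=t$ collapses them to $(1-c^j)/(1-c^t)$, while the accumulated factors of $c$ contribute $c^{t-j}$. Combining with $\beta_j = (1-c)/(1-c^j)$, the $1-c^j$ terms cancel and I obtain the closed form
\[
w_j^{(t)} = \frac{(1-c)\,c^{t-j}}{1-c^t} = \frac{\beta\,(1-\beta)^{t-j}}{1 - (1-\beta)^t} \,.
\]

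Claim 1 then follows immediately: the weights are proportional to $(1-\beta)^{t-j}$, which is exactly the geometric profile of a constant step size $\beta$ (for which $w_j^{(t)} \propto \beta(1-\beta)^{t-j}$), so consecutive data points stand in the same ratio $1-\beta$ under both schemes; the sole difference is the overall normalizer $1-(1-\beta)^t$. Claim 2 admits two equally short arguments. Either observe that $\beta_1 = \beta/(1-(1-\beta)) = 1$, so the factor $1-\beta_1 = 0$ annihilates the entire product $\prod_{i=1}^t(1-\beta_i)$ multiplying $\bar z_0$ (equivalently $\bar z_1 = Z_1$, and a one-line induction removes $\bar z_0$ for all later $t$); or sum the closed-form weights, $\sum_{j=1}^t w_j^{(t)} = \frac{\beta}{1-c^t}\sum_{k=0}^{t-1} c^k = 1$, which forces the residual weight on $\bar z_0$ to vanish. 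The closest thing to an obstacle is recognizing the telescoping structure of the product, but once the factors are written in the form above this is entirely routine, and I expect no genuine difficulty.
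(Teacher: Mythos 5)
Your proof is correct, but it takes a genuinely different route from the paper's. The paper never unrolls the recursion; instead it proves by induction the identity
\begin{equation*}
\bar z_t \;=\; \frac{\bar z_t^\beta - (1-\beta)^t\,\bar z_0}{1 - (1-\beta)^t}\,,
\end{equation*}
where $\bar z_t^\beta$ denotes the ordinary constant-step-size average $\bar z_t^\beta = (1-\beta)\bar z_{t-1}^\beta + \beta Z_t$ started from the same $\bar z_0$: the corrected sequence equals the plain exponential average with the initialization's weight $(1-\beta)^t$ subtracted off and the remainder renormalized, from which both claims are read off. You instead solve the recursion explicitly, write each factor as $1-\beta_i = (1-\beta)\bigl(1-(1-\beta)^{i-1}\bigr)/\bigl(1-(1-\beta)^i\bigr)$, and telescope the products to obtain the closed-form weights $w_j^{(t)} = \beta(1-\beta)^{t-j}/\bigl(1-(1-\beta)^t\bigr)$; your telescoping computation checks out, your observation that $\beta_1 = 1$ kills the weight on $\bar z_0$ is clean (the paper uses the same base case, $\bar z_1 = Z_1$, to anchor its induction), and your alternative weights-sum-to-one argument is also valid since each update is an affine combination. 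What each approach buys: your derivation makes both claims immediately visible from a single explicit formula for the weights, which is arguably more self-contained; the paper's identity, on the other hand, exhibits the correspondence to the Adam-style bias correction of \citet{Kingsma:2014} discussed right after the proposition (divide the zero-initialized average by $1-(1-\beta)^t$), which is precisely the point the authors want to make in that section.
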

A similar result was derived to remove the effect of the initialization of certain parameters by \citet{Kingsma:2014} for a stochastic optimization algorithm called Adam. In that work, the initial values are assumed to be zero and a standard exponentially weighted average is explicitly computed and stored, and then divided by a term analogous to $1 - (1 - \beta)^t$.  The step size \eqref{better_beta} corrects for any initialization in place, without storing auxiliary variables, but for the rest the method and its motivation are very similar.

Alternatively, it is possible to initialize the normalization safely, by choosing a scale that is relatively high initially.  This can be beneficial when at first the targets are relatively small and noisy.  If we would then use the step size in \eqref{better_beta}, the updates would treat these initial observations as important, and would try to fit our approximating function to the noise.  A high initialization (e.g., $\nu_0 = 10^4$ or $\nu_0 = 10^6$) would instead reduce the effect of the first targets on the learning updates, and would instead use these only to find an appropriate normalization.  Only after finding this normalization the actual learning would then commence.

\subsection*{Deep Pop-Art}
Sometimes it makes sense to apply the normalization not to the output of the network, but at a lower level.  For instance, the $i^{\text{th}}$ output of a neural network with a soft-max on top can be written
\[
\f_{\th,i}(X) = \frac{\text{e}^{[\W\g_{\th}(X) + \b]_i}}{\sum_j \text{e}^{[\W\g_{\th}(X) + \b]_j}} \,,
\]
where $\W$ is the weight matrix of the last linear layer before the soft-max.  The actual outputs are already normalized by using the soft-max, but the outputs $\W\g_{\th}(X) + \b$ of the layer below the soft-max may still benefit from normalization.  To determine the targets to be normalized, we can either back-propagate the gradient of our loss through the soft-max or invert the function.

More generally, we can consider applying normalization at any level of a hierarchical non-linear function. This seems a promising way to counteract undesirable characteristics of back-propagating gradients, such as vanishing or exploding gradients \citep{Hochreiter:1998}.

In addition, normalizing gradients further down in a network can provide a straightforward way to combine gradients from different sources in more complex network graphs than a standard feedforward multi-layer network.  First, the normalization allows us to normalize the gradient from each source separately before merging gradients, thereby avoiding one source to fully drown out any others and allowing us to weight the gradients by actual relative importance, rather than implicitly relying on the current magnitude of each as a proxy for this.  Second, the normalization can prevent undesirably large gradients when many gradients come together at one point of the graph, by normalizing again after merging gradients.

\section*{Proofs}
\begin{theoremAppendix}
Consider a function $f : \reals^n \to \reals^k$ defined by 
\[
f_{\th,\S,\m,\W,\b}(x)
\quad\equiv\quad
\S \left( \W \g_{\th}(x) + \b \right) + \m \,,
\]
where $\g_{\th} : \reals^n \to \reals^m$ is any non-linear function of $x \in \reals^n$, $\S$ is a $k\times k$ matrix, $\m$ and $\b$ are $k$-element vectors, and $\W$ is a $k \times m$ matrix. Consider any change of the scale and shift parameters from $\S$ to $\S_2$ and from $\m$ to $\m_2$, where $\S_2$ is non-singular. If we then additionally change the parameters $\W$ and $\b$ to $\W_2$ and $\b_2$, defined by
\begin{align*}
\W_2 & = \S^{-1}_2 \S \W
& \text{ and } & & 
\b_2 & = \S^{-1}_2\left( \S \b + \m - \m_2 \right) \,,
\end{align*}
then the outputs of the unnormalized function $f$ are preserved precisely in the sense that
\[
f_{\th,\S,\m,\W,\b}(x) = f_{\th,\S_2,\m_2,\W_2,\b_2}(x)\,,\qquad\forall x \,.
\]
\end{theoremAppendix}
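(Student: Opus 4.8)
The plan is to verify the claimed identity by direct substitution. The statement asserts that after changing the normalization parameters and compensating with the stated updates to $\W$ and $\b$, the unnormalized output is unchanged for every input $x$. Since $f$ is defined explicitly in terms of $\S$, $\m$, $\W$, $\b$, and $\g_\th$, this is purely an algebraic identity and requires no clever idea — only careful bookkeeping of the matrix and vector operations. I would therefore simply expand both sides and show they coincide.

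\medskip

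First I would write out the new output by plugging the new parameters into the definition \eqref{def_f}:
\[
f_{\th,\S_2,\m_2,\W_2,\b_2}(x)
= \S_2\left(\W_2\,\g_\th(x) + \b_2\right) + \m_2 \,.
\]
Next I would substitute $\W_2 = \S_2^{-1}\S\W$ and $\b_2 = \S_2^{-1}(\S\b + \m - \m_2)$ into this expression. The key observation is that $\S_2$ multiplies both $\W_2$ and $\b_2$, and since both contain a factor of $\S_2^{-1}$, these cancel: $\S_2\W_2 = \S_2\S_2^{-1}\S\W = \S\W$ and $\S_2\b_2 = \S_2\S_2^{-1}(\S\b + \m - \m_2) = \S\b + \m - \m_2$. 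This cancellation is exactly why $\S_2$ must be assumed non-singular, so that $\S_2^{-1}$ exists.

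\medskip

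After the cancellation, the expression becomes
\[
\S\W\,\g_\th(x) + \S\b + \m - \m_2 + \m_2
= \S\big(\W\g_\th(x) + \b\big) + \m \,,
\]
where the $-\m_2$ and $+\m_2$ terms cancel and $\S$ factors back out of the first two terms. This is precisely $f_{\th,\S,\m,\W,\b}(x)$, the original output. Since $x$ was arbitrary and $\g_\th$ was left untouched, the identity holds for all $x$, completing the argument.

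\medskip

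I do not expect any genuine obstacle here: the result is an exact algebraic cancellation rather than an approximation or a limiting statement. The only point requiring care is respecting non-commutativity of matrix multiplication throughout — in particular keeping the factor order $\S_2^{-1}\S\W$ intact and not inadvertently commuting $\S_2^{-1}$ past $\S$ — and ensuring the compensating term in $\b_2$ correctly accounts for the shift from $\m$ to $\m_2$. The non-singularity of $\S_2$ is the sole hypothesis that does real work, guaranteeing $\S_2^{-1}$ is well-defined so the cancellations go through.
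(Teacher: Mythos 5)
Your proposal is correct and follows exactly the same route as the paper's proof: expand $f_{\th,\S_2,\m_2,\W_2,\b_2}(x)$ by direct substitution, cancel $\S_2\S_2^{-1}$, and cancel $-\m_2 + \m_2$ to recover $\S(\W\g_{\th}(x)+\b)+\m$. Nothing is missing; the remark that non-singularity of $\S_2$ is what makes the cancellation legitimate matches the paper's (implicit) use of that hypothesis.
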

\begin{proof}
The stated result follows from
\begin{align*}
f_{\th,\S_2,\m_2,\W_2,\b_2}(x) 
& = \S_2 \f_{\th,\W_2,\b_2}(x) + \m_2 \\
& = \S_2 \left( \W_2 \g_{\th}(x) + \b_2 \right) + \m_2 \\
& = \S_2 \left( \S_2^{-1} \S \W \g_{\th}(x) + \S^{-1}_2 \left( \S \b + \m - \m_2 \right) \right) + \m_2 \\
& = \left( \S \W \g_{\th}(x) + \S \b + \m - \m_2 \right) + \m_2 \\
& = \S \W \g_{\th}(x) + \S \b + \m \\
& = \S \f_{\th,\W,\b}(x) + \m \\
& = f_{\th,\S,\m,\W,\b}(x) \,.\qedhere
\end{align*}
\end{proof}
\begin{theoremAppendix}
When using updates (4) to adapt the normalization parameters $\sigma$ and $\mu$, the normalized target $\sigma_t^{-1} ( Y_t - \mu_t )$ is bounded for all $t$ by
\[
- \sqrt{\frac{1 - \beta_t}{\beta_t}} \le \frac{Y_t - \mu_t}{\sigma_t} \le \sqrt{\frac{1 - \beta_t}{\beta_t}} \,.
\]
\end{theoremAppendix}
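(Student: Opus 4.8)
The plan is to bound the square $(Y_t - \mu_t)^2 / \sigma_t^2$ and only take a square root at the very end, exploiting two algebraic identities that fall directly out of the recursions in \eqref{online_normal}. Everything reduces to matching the numerator against a clean lower bound on the denominator.

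First I would rewrite the numerator. Substituting $\mu_t = (1-\beta_t)\mu_{t-1} + \beta_t Y_t$ gives immediately
\[
Y_t - \mu_t = (1-\beta_t)(Y_t - \mu_{t-1}) \,,
\]
so that $(Y_t - \mu_t)^2 = (1-\beta_t)^2 (Y_t - \mu_{t-1})^2$. The crux is then to obtain a matching lower bound on $\sigma_t^2 = \nu_t - \mu_t^2$. Writing $\nu_{t-1} = \sigma_{t-1}^2 + \mu_{t-1}^2$ and expanding both $\nu_t = (1-\beta_t)\nu_{t-1} + \beta_t Y_t^2$ and $\mu_t^2$, I expect the $\mu_{t-1}^2$, $Y_t^2$, and cross terms to recombine into a perfect square, yielding the decomposition
\[
\sigma_t^2 = (1-\beta_t)\,\sigma_{t-1}^2 + \beta_t(1-\beta_t)\,(Y_t - \mu_{t-1})^2 \,.
\]

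Since $\beta_t \in [0,1]$, both summands are nonnegative whenever $\sigma_{t-1}^2 \ge 0$; this decomposition itself supplies the inductive proof that $\sigma_t^2 \ge 0$ for all $t$ (the positivity remark following \eqref{online_normal}), starting from a nonnegative $\sigma_0^2$. Dropping the first term then gives $\sigma_t^2 \ge \beta_t(1-\beta_t)(Y_t - \mu_{t-1})^2$. Combining this with the numerator identity, the ratio becomes
\[
\frac{(Y_t - \mu_t)^2}{\sigma_t^2} \le \frac{(1-\beta_t)^2 (Y_t - \mu_{t-1})^2}{\beta_t(1-\beta_t)(Y_t - \mu_{t-1})^2} = \frac{1-\beta_t}{\beta_t} \,,
\]
where the factor $(Y_t - \mu_{t-1})^2$ cancels. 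Taking square roots of $(Y_t - \mu_t)^2/\sigma_t^2 \le (1-\beta_t)/\beta_t$ yields the claimed two-sided bound.

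The main obstacle is the denominator expansion: verifying that the $\mu_{t-1}^2$ and $Y_t^2$ coefficients reduce to $\beta_t(1-\beta_t)$ and that the cross term assembles exactly into $-2\beta_t(1-\beta_t)\mu_{t-1}Y_t$, so that the whole quadratic collapses to $\beta_t(1-\beta_t)(Y_t-\mu_{t-1})^2$. The only other point requiring care is the degenerate case $Y_t = \mu_{t-1}$, which makes the numerator zero so the bound holds trivially; the possibility $\sigma_t^2 = 0$ is excluded by the explicit lower bound $\nu_t - \mu_t^2 \ge \epsilon > 0$ mentioned after \eqref{online_normal}, which guarantees the division is well defined. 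Notably, no assumption on the distribution of the targets enters anywhere.
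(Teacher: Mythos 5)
Your proposal is correct and follows essentially the same route as the paper's proof: both rely on the identity $Y_t - \mu_t = (1-\beta_t)(Y_t - \mu_{t-1})$ and lower-bound $\sigma_t^2$ using $\nu_{t-1} \ge \mu_{t-1}^2$ (i.e.\ $\sigma_{t-1}^2 \ge 0$), which collapses the denominator to a multiple of $(Y_t - \mu_{t-1})^2$ that cancels against the numerator. Your packaging via the explicit decomposition $\sigma_t^2 = (1-\beta_t)\sigma_{t-1}^2 + \beta_t(1-\beta_t)(Y_t-\mu_{t-1})^2$ is just a rearrangement of the paper's algebra, though it has the minor bonus of also proving the positivity-preservation remark made after the update equations.
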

\begin{proof}
\begin{align*}
\left( \frac{ Y_t - \mu_t }{ \sigma_t }\right)^2 
& = \left( \frac{ Y_t - ( 1 - \beta_t ) \mu_{t-1} - \beta_t Y_t }{ \sigma_t } \right)^2\\
& = \frac{(1 - \beta_t)^2 ( Y_t - \mu_{t-1} )^2 }{ \nu_t - \mu_t^2 } \\
& = \frac{(1 - \beta_t)^2 ( Y_t - \mu_{t-1} )^2 }{ (1 - \beta_t ) \nu_{t-1} + \beta_t Y_t^2 - \left( ( 1 - \beta_t ) \mu_{t-1} + \beta_t Y_t \right)^2 } \\
& = \frac{(1 - \beta_t)^2 ( Y_t - \mu_{t-1} )^2 }{ (1 - \beta_t ) \left( \nu_{t-1} + \beta_t Y_t^2 - ( 1 - \beta_t ) \mu_{t-1}^2 - 2 \beta_t \mu_{t-1} Y_t \right) } \\
& = \frac{(1 - \beta_t) ( Y_t - \mu_{t-1} )^2 }{ \nu_{t-1} + \beta_t Y_t^2 - ( 1 - \beta_t ) \mu_{t-1}^2 - 2 \beta_t \mu_{t-1} Y_t  } \\
& \le \frac{(1 - \beta_t) ( Y_t - \mu_{t-1} )^2 }{ \mu_{t-1}^2 + \beta_t Y_t^2 - ( 1 - \beta_t ) \mu_{t-1}^2 - 2 \beta_t \mu_{t-1} Y_t  } \\
& = \frac{(1 - \beta_t) ( Y_t - \mu_{t-1} )^2 }{ \beta_t Y_t^2 + \beta_t \mu_{t-1}^2 - 2 \beta_t \mu_{t-1} Y_t  } \\
& = \frac{(1 - \beta_t) ( Y_t - \mu_{t-1} )^2 }{ \beta_t (Y_t - \mu_{t-1})^2 }\\
& = \frac{(1 - \beta_t) }{ \beta_t } \,,
\end{align*}
The inequality follows from the fact that $\nu_{t-1} \ge \mu_{t-1}^2$.
\end{proof}

\begin{theoremAppendix}
Consider two functions defined by
\begin{align*}
f_{\th,\S,\m,\W,\b}(x)
& = \S ( \W \g_{\th}(x) + \b ) + \m \,,\text{ and}\\
f_{\th,\W,\b}(x)
& = \W \g_{\th}(x) + \b \,,
\end{align*}
where $\g_{\th}$ is the same differentiable function in both cases, and the functions are initialized identically, using $\S_0=\Id$ and $\m = {\bm 0}$, and the same initial $\th_0$, $\W_0$ and $\b_0$.
Consider updating the first function using Algorithm 1 and the second
using Algorithm 2.
Then, for any sequence of non-singular scales $\{\S_t\}_{t=1}^\infty$ and shifts $\{\m_t\}_{t=1}^\infty$, the algorithms are equivalent in the sense that 1) the sequences $\{\th_t\}_{t=0}^\infty$ are identical, 2) the outputs of the functions are identical, for any input.
\end{theoremAppendix}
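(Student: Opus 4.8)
The plan is to prove both claims simultaneously by induction on the iteration index $t$, maintaining a set of coupled invariants that tie the internal state of Pop-Art SGD (Algorithm 1) to that of Normalized SGD (Algorithm 2). I write the Pop-Art variables as $\th_t, \W_t, \b_t$ together with the externally supplied $\S_t, \m_t$, and the Normalized SGD variables as $\th'_t, \W'_t, \b'_t$. The invariants I would carry are
\[
\th_t = \th'_t \,,\qquad \S_t \W_t = \W'_t \,,\qquad \S_t \b_t + \m_t = \b'_t \,.
\]
The base case is immediate from the stated initialization: with $\S_0 = \Id$, $\m_0 = {\bm 0}$ and equal $\th_0, \W_0, \b_0$, all three hold at $t=0$. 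The first invariant is exactly claim 1, and the latter two deliver claim 2, since the Pop-Art unnormalized output satisfies $\S_t(\W_t \g_{\th_t}(x) + \b_t) + \m_t = (\S_t\W_t)\g_{\th_t}(x) + (\S_t\b_t + \m_t) = \W'_t\g_{\th'_t}(x) + \b'_t$, which is the Normalized SGD output.

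For the inductive step I would assume the invariants at $t-1$ and trace one iteration of each algorithm. Since $\th_{t-1} = \th'_{t-1}$, both algorithms evaluate the same $\g_{\th}$ at the same input and parameters, so the stored $\bm{h} = \g_{\th_{t-1}}(X_t)$ and the Jacobian $\bm{J}$ coincide. The crux is a single identity: the \emph{normalized} error formed by Pop-Art equals $\S_t^{-1}$ times the \emph{unnormalized} error formed by Normalized SGD, that is $\dn = \S_t^{-1}\dn'$. I would establish this by substituting the rescaled weight $\S_t^{-1}\S_{t-1}\W_{t-1}$ and rescaled bias $\S_t^{-1}(\S_{t-1}\b_{t-1} + \m_{t-1} - \m_t)$ (the values in force after Pop-Art's rescaling step, with $\S = \S_t$ and $\m = \m_t$) into Pop-Art's error line $\W\bm{h} + \b - \S_t^{-1}(Y_t - \m_t)$. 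Factoring out $\S_t^{-1}$ leaves $\S_{t-1}\W_{t-1}\bm{h} + \S_{t-1}\b_{t-1} + \m_{t-1} - Y_t$, where the current shift $\m_t$ has cancelled; applying the inductive identities $\S_{t-1}\W_{t-1} = \W'_{t-1}$ and $\S_{t-1}\b_{t-1} + \m_{t-1} = \b'_{t-1}$ collapses this to $\W'_{t-1}\bm{h} + \b'_{t-1} - Y_t = \dn'$.

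Given $\dn = \S_t^{-1}\dn'$, each invariant at time $t$ follows by a short computation. For $\th$, both update directions reduce to $\bm{J}\,\W_{t-1}\tr\,\S_{t-1}\tr(\S_t\tr)^{-1}\S_t^{-1}\dn'$ after I use the rescaled transpose in the Pop-Art form and $\W'_{t-1} = \S_{t-1}\W_{t-1}$ in the Normalized SGD form, so $\th_t = \th'_t$. For $\W$, Pop-Art gives $\W_t = \S_t^{-1}(\S_{t-1}\W_{t-1} - \a\dn'\bm{h}\tr) = \S_t^{-1}(\W'_{t-1} - \a\dn'\bm{h}\tr) = \S_t^{-1}\W'_t$, i.e. $\S_t\W_t = \W'_t$. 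For $\b$, Pop-Art gives $\S_t\b_t + \m_t = \b'_{t-1} - \a\dn' = \b'_t$ once the $\m_t$ terms cancel. The two claims then drop out of the completed induction exactly as described above.

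The main obstacle is not any individual calculation — each is routine once $\dn = \S_t^{-1}\dn'$ is in hand — but rather choosing the invariants and keeping the bookkeeping exact. In particular, one must resist tracking only $\th_t = \th'_t$ and equality of outputs: the induction does not close without the sharper internal relations $\S_t\W_t = \W'_t$ and $\S_t\b_t + \m_t = \b'_t$. Care is also needed with the order of operations in Pop-Art (the rescaling of $\W$ and $\b$ must be applied before the error is formed) and with the transpose placement in the $\th$-update, where the factor $(\S_t^{-1}\W)\tr\S_t^{-1}$ from Algorithm 2 must be matched against the transposed rescaled $\W$ from Algorithm 1. It is worth noting that the shift $\m_t$ never appears in Algorithm 2 yet the equivalence still holds, precisely because $\m_t$ cancels in the error and is otherwise absorbed into the invariant $\S_t\b_t + \m_t = \b'_t$.
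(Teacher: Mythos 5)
Your proposal is correct and follows essentially the same argument as the paper: an induction on $t$ with exactly the paper's invariants ($\th_t = \th'_t$, $\W^1_t = \S_t^{-1}\W^2_t$, $\b^1_t = \S_t^{-1}(\b^2_t - \m_t)$, written with the inverses moved to the other side), closed by composing the invariants to equate the outputs. The only difference is presentational: you isolate the identity $\dn = \S_t^{-1}\dn'$ as an explicit stepping stone, whereas the paper folds that relation into the update algebra for $\W$, $\b$, and $\th$.
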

\begin{proof}
Let $\th_t^1$ and $\th_t^2$ denote the parameters of $\g_{\th}$ for Algorithms 1 and 2, respectively.  Similarly, let $\W^1$ and $\b^1$ be parameters of the first function, while $\W^2$ and $\b^2$ are parameters of the second function.
It is enough to show that single updates of both Algorithms 1 and 2 from the same starting points have equivalent results.  That is, if 
\begin{align*}
\th^2_{t-1} & = \th^1_{t-1} \;,\; \text{and} \\
f_{\th^2_{t-1},\W^2_{t-1},\b^2_{t-1}}(x)
& = f_{\th^1_{t-1},\S_{t-1},\m_{t-1},\W^1_{t-1},\b^1_{t-1}}(x) \,,
\end{align*}
then it must follow that
\begin{align*}
\th^2_{t} & = \th^1_{t} \;,\;\text{and} \\
f_{\th^2_{t},\W^2_{t},\b^2_{t}}(x)
& = f_{\th^1_{t},\S_{t},\m_{t},\W^1_{t},\b^1_{t}}(x) \,,
\end{align*}
where the quantities $\th^2$, $\W^2$, and $\b^2$ are updated with Algorithm 2 and quantities $\th^1$, $\W^1$, and $\b^1$ are updated with Algorithm 1.
We do not require $\W^2_t = \W^1_t$ or $\b^2_t = \b^1_t$, and indeed these quantities will generally differ.

We use the shorthands $f^1_t$ and $f^2_t$ for the first and second function, respectively.
First, we show that $\W^1_t = \S^{-1}_t \W^2_t$, for all $t$.  For $t=0$, this holds trivially because $\W^1_0 = \W^2_0 = \W_0$, and $\S_0 = {\bm I}$.  Now assume that $\W^1_{t-1} = \S^{-1}_{t-1} \W^2_{t-1}$.  Let $\d_t = Y_t - f^1_t(X_t)$ be the unnormalized error at time $t$. 
Then, Algorithm 1 results in
\begin{align*}
\W^1_t
& = \S_t^{-1} \S_{t-1} \W^1_{t-1} + \a \S_t^{-1} \d_t g_{\th_{t-1}}(X_t)\tr \\
& = \S_t^{-1} \left( \S_{t-1} \W^1_{t-1} + \a \d_t g_{\th_{t-1}}(X_t)\tr \right) \\
& = \S_t^{-1} \left( \W^2_{t-1} + \a \d_t g_{\th_{t-1}}(X_t)\tr \right) \\
& = \S_t^{-1} \W^2_t \,.
\end{align*}
Similarly, $\b^1_0 = \S_0^{-1} ( \b^2_0 - \mu_0 )$ and if $\b^1_{t-1} = \S_{t-1}^{-1} ( \b^2_{t-1} - \mu_{t-1} )$ then
\begin{align*}
\b^1_t
& = \S_t^{-1} (\S_{t-1} \b^1_{t-1} + \mu_{t-1} - \mu_t) + \a \S_t^{-1} \d_t \\
& = \S_t^{-1} (\b^2_{t-1} - \mu_t) + \a \S_t^{-1} \d_t \\
& = \S_t^{-1} (\b^2_{t-1} - \mu_t + \a  \d_t )\\
& = \S_t^{-1} (\b^2_t - \mu_t  )\,.
\end{align*}
Now, assume that $\th^1_{t-1} = \th^2_{t-1}$. Then,
\begin{align*}
\th^1_t & = \th^1_{t-1} + \a {\bm J}_t (\W^1_{t-1})\tr \S^{-1}_t \d \\
& = \th^2_{t-1} + \a {\bm J}_t (\S_{t-1}^{-1} \W^2_{t-1})\tr \S^{-1}_t \d \\
& = \th^2_t \,.
\end{align*}
As $\th^1_0 = \th^2_0$ by assumption, $\th_t^1 = \th_t^2$ for all $t$.

Finally, we put everything together and note that $f^1_0 = f^2_0$ and that
\begin{align*}
f^1_t(x)
& = \S_t ( \W^1_t \g_{\th^1_t}(x) + \b^1_t ) + \m_t \\
& = \S_t ( \S_t^{-1} \W^2_t \g_{\th^2_t}(x) + \S_t^{-1}( \b^2_t - \m_t ) ) + \m_t \\
& = \W^2_t \g_{\th^2_t}(x) + \b^2_t \\
& = f^2_t(x) \quad \forall x, t \,. \qedhere
\end{align*}

\end{proof}

\begin{theoremAppendix}
If the targets $\{Y_t\}_{t=1}^\infty$ are distributed according to a normal distribution with arbitrary finite mean and variance, then the objective $\text{P}( \sigma^{-1} ( Y - \mu ) \in [-1,1] ) = p$ is equivalent to the joint objective $\E{ Y - \mu } = 0$ and $\E{ \sigma^{-2} ( Y - \mu )^2 } = s^2$ for
\[
p
~~=~~ \erf\left( \frac{1}{\sqrt{2} s} \right)
\]
\end{theoremAppendix}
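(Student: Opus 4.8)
The plan is to show the two objectives are equivalent by expressing both in terms of the standardization of the normal distribution, and then matching the constants. The key observation is that both objectives force the same mean alignment, so the real content is in matching the variance objective against the percentile objective.

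First I would handle the mean. Assume $Y \sim \mathcal{N}(\bar\mu, \bar\sigma^2)$ for some finite $\bar\mu$ and $\bar\sigma^2$. The objective $\E{Y - \mu} = 0$ forces $\mu = \bar\mu$, i.e.\ the shift must equal the true mean. Under this choice $Y - \mu$ is a zero-mean normal, so $(Y-\mu)/\bar\sigma$ is standard normal. I would then note that the percentile objective $\P{(Y-\mu)/\sigma \in [-1,1]} = p$ also only makes sense once $\mu$ is chosen; by symmetry of the centered normal the natural (and for the equivalence, the relevant) choice is again $\mu = \bar\mu$, so that the interval $[-1,1]$ is symmetric around the distribution's mean. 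This reduces both objectives to a single statement about the scale $\sigma$.

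Next I would rewrite each objective as a condition on the ratio $\sigma/\bar\sigma$. For the variance objective, $\E{\sigma^{-2}(Y-\mu)^2} = s^2$ becomes $\bar\sigma^2/\sigma^2 = s^2$, i.e.\ $\sigma = \bar\sigma/s$. For the percentile objective, writing $Z = (Y-\mu)/\bar\sigma$ as a standard normal, the event $(Y-\mu)/\sigma \in [-1,1]$ is $|Z| \le \sigma/\bar\sigma$, so $p = \P{|Z| \le \sigma/\bar\sigma} = \erf\!\left(\frac{\sigma}{\sqrt{2}\,\bar\sigma}\right)$, using the standard identity $\P{|Z| \le a} = \erf(a/\sqrt{2})$ for standard normal $Z$. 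Substituting the scale $\sigma = \bar\sigma/s$ from the variance objective gives $p = \erf\!\left(\frac{1}{\sqrt{2}\,s}\right)$, which is exactly the claimed correspondence. Conversely, since $\erf$ is strictly increasing on $[0,\infty)$, this relation is invertible, so fixing $p$ pins down $\sigma/\bar\sigma$ uniquely and hence $s$, establishing the equivalence in both directions.

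The main obstacle, and the step I would be most careful about, is the identity $\P{|Z| \le a} = \erf(a/\sqrt{2})$ and the attendant handling of the mean constraint: the equivalence genuinely requires the shift $\mu$ to be tied down (by $\E{Y-\mu}=0$ on one side and by the implicit symmetric centering of the interval on the other), and one must be clear that without aligning $\mu$ to $\bar\mu$ the two objectives would not match. Everything else is a routine substitution using normality; the crux is that normality is precisely what lets a single scalar $\sigma/\bar\sigma$ simultaneously determine both the second moment and the in-interval probability, which is why the correspondence fails for non-normal targets.
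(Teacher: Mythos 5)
Your proof is correct and takes essentially the same approach as the paper's: both reduce the problem, via the affine-invariance of normality, to computing the mass of a zero-mean normal in the symmetric interval $[-1,1]$, and then apply the identity $\P{ |Z| \le a } = \erf\left( a/\sqrt{2} \right)$ to match the variance $s^2$ against the argument of $\erf$. Your extra care about pinning down $\mu$ and the invertibility of $\erf$ (for the converse direction) only makes explicit what the paper's proof leaves implicit.
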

\begin{proof}
For any $\mu$ and $\sigma$, the normalized targets are distributed according to a normal distribution because the targets themselves are normally distributed and the normalization is an affine transformation.  For a normal distribution with mean zero and variance $v$, the values $1$ and $-1$ are both exactly $1/\sqrt{v}$ standard deviations from the mean, implying that the ratio of data between these points is $\Phi(1/\sqrt{v}) - \Phi(-1/\sqrt{v})$, where
\[
\Phi(x) = \frac{1}{2} \left( 1 + \erf\left( \frac{x}{\sqrt{2}} \right) \right) \,
\]
is the standard normal cumulative distribution.
The normalization by mean and variance is then equivalent to a normalization by percentiles with a ratio $p$ defined by 
\begin{align*}
& p
~~=~~ \Phi(1/\sqrt{v}) - \Phi(-1/\sqrt{n})\\
& =~~ \frac{1}{2} \left( 1 + \erf \left( \frac{1}{\sqrt{2 v}} \right) \right) - \frac{1}{2} \left( 1 + \erf \left( - \frac{1}{\sqrt{2 v}} \right) \right)\\
& =~~ \erf \left( \frac{1}{\sqrt{2 v}} \right)\,,
\end{align*}
where we used the fact that $\erf$ is odd, such that $\erf(x) = -\erf(-x)$.
\end{proof}
\begin{theoremAppendix}
If $\sum_{t=1}^\infty \beta_t$ and $\sum_{t=1}^\infty \beta_t^2$, and the distribution of targets is stationary, then the updates
\begin{align*}
\ymax_t & =
\ymax_{t-1} + \beta_t \left(\mathcal{I}(Y_t > \ymax_{t-1}) - \frac{1-p}{2}\right)
\quad\text{and}\\
\ymin_t & =
\ymin_{t-1} - \beta_t \left(\mathcal{I}(Y_t < \ymin_{t-1}) - \frac{1-p}{2}\right)\,,
\end{align*}
converge to values such that 
\[
\P{ Y > \ymax_t } = \P{ Y < \ymin_t } = \frac{1 - p}{2} \,.
\]
\end{theoremAppendix}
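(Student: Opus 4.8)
The plan is to recognize the two recursions as a pair of decoupled one-dimensional Robbins--Monro stochastic approximation schemes and to invoke a standard almost-sure convergence theorem. Since $\ymax_t$ depends only on $\ymax_{t-1}$ and the fresh sample $Y_t$ (and likewise $\ymin_t$ on $\ymin_{t-1}$ and $Y_t$), the two recursions can be analyzed in isolation; the argument for $\ymin_t$ is the mirror image of the one for $\ymax_t$, so I would treat the latter in detail.

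First I would cast the $\ymax$ update in canonical stochastic-approximation form. Conditioning on $\ymax_{t-1}$ and using stationarity, the expected increment is $\beta_t\, h(\ymax_{t-1})$ with mean field
\[
h(y) \;\equiv\; \P{Y > y} - \frac{1-p}{2}\,,
\]
so that the recursion reads $\ymax_t = \ymax_{t-1} + \beta_t\big(h(\ymax_{t-1}) + \xi_t\big)$, where $\xi_t \equiv \I{Y_t > \ymax_{t-1}} - \P{Y > \ymax_{t-1}}$ is a martingale-difference noise with respect to the natural filtration. Because $\xi_t$ is a centered indicator, $|\xi_t| \le 1$, so its conditional variance is uniformly bounded.

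Next I would verify the hypotheses of the convergence theorem. The step-size conditions $\sum_t \beta_t = \infty$ and $\sum_t \beta_t^2 < \infty$ are assumed. The survival function $\P{Y > y}$ is non-increasing, so $h$ is non-increasing, and any root $y^\star$ satisfies $\P{Y > y^\star} = (1-p)/2$ --- precisely the asserted limiting condition. Under the mild assumption that the stationary target distribution is continuous with positive density near this quantile, $h$ is continuous and strictly decreasing there, which gives a unique root together with the restoring sign condition $(y - y^\star)\,h(y) < 0$ for $y \neq y^\star$. The increments are bounded by $\beta_t$ in magnitude and the restoring drift keeps the iterates from escaping, so the sequence is stable. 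These are exactly the ingredients a Robbins--Monro (Dvoretzky-type) theorem requires to conclude $\ymax_t \to y^\star$ almost surely. Running the symmetric argument for $\ymin_t$, whose drift is $\tfrac{1-p}{2} - \P{Y < y}$ and is again non-increasing with a unique root at the lower quantile, yields convergence to the matching value, and the two limits together give \eqref{percentile_objective}.

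The main obstacle is not the stochastic-approximation machinery but the regularity of the target distribution. The uniqueness of the root and the strict sign condition both hinge on the survival function being strictly monotone at level $(1-p)/2$; if the distribution places an atom there or its CDF is flat across that level, the equation $\P{Y > y} = (1-p)/2$ may have no exact solution or an entire interval of them, and one must then impose a continuity/positive-density assumption (as above) or weaken the claim to convergence to the set of near-solutions. The boundary event $\{Y_t = \ymax_{t-1}\}$, which distinguishes $>$ from $\ge$ in the indicator, has probability zero under the same continuity assumption and so causes no difficulty.
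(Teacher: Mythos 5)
Your proposal is correct and follows essentially the same route as the paper's own proof: both treat the recursions as Robbins--Monro stochastic approximation, identify the fixed point where $\P{ Y > \ymax_t } = (1-p)/2$, note the bounded noise and restoring drift, and invoke the step-size conditions together with the standard stochastic-approximation literature. Your write-up is more explicit than the paper's sketch---formalizing the mean field and martingale-difference noise, and flagging the continuity/positive-density assumption at the quantile that the paper's statement and proof leave implicit---but the underlying argument is the same.
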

\begin{proof}
Note that 
\begin{align*}
\E{ \ymax_t = \ymax_{t-1} }
& \iff
\E{ \mathcal{I}(Y > \ymax_t) } = (1 - p)/2\\
& \iff
\P{ Y > \ymax_t } = (1 - p)/2 \,,
\end{align*}
so this is a fixed point of the update.
Note further that the variance of the stochastic update is finite, and that the expected direction of the updates is towards the fixed point, so that this fixed point is an attractor.  The conditions on the step sizes ensure that the fixed point is reachable ($\sum_{t=1}^\infty \beta_t = \infty$) and that we converge upon it in the limit ($\sum_{t=1}^\infty \beta_t^2 < \infty$).  For more detail and weaker conditions, we refer to reader to the extensive literature on stochastic approximation \citep{Robbins:1951,Kushner:2003}.  The proof for the update for $\ymin_t$ is exactly analogous.
\end{proof}

\begin{theoremAppendix}
Consider minibatches $\{ \{ Y_{t,1}, \ldots, Y_{t,B} \}\}_{t=1}^\infty$ of size $B \ge 2$ whose elements are drawn i.i.d. from a uniform distribution with support on $[a,b]$. If $\sum_t \beta_t = \infty$ and $\sum_t \beta_t^2 < \infty$, then in the limit the updates
\begin{align*}
\ymin_t & = (1 - \beta_t) \ymin_{t-1} + \beta_t \min_b Y_{t,b} \,,
\quad\text{and}\\
\ymax_t & = (1 - \beta_t) \ymax_{t-1} + \beta_t \max_b Y_{t,b} \,
\end{align*}
converge to values such that
\[
\P{ Y > \ymax_t } = \P{ Y < \ymin_t } = (1 - p)/2 \,,
\]
with $p = (B-1)/(B+1)$.
\end{theoremAppendix}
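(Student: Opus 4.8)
The plan is to reuse the stochastic-approximation route that established the preceding percentile-convergence proposition, reducing the claim to (i) identifying the fixed point of each update, (ii) evaluating that fixed point as an expected extreme order statistic, and (iii) converting it into the stated tail probability under the uniform law.

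First I would locate the fixed point. Rewriting the update as $\ymax_t - \ymax_{t-1} = \beta_t(\max_b Y_{t,b} - \ymax_{t-1})$, the expected increment given $\ymax_{t-1}$ is $\beta_t(\E{\max_b Y_{t,b}} - \ymax_{t-1})$, which vanishes exactly when $\ymax_{t-1} = \E{\max_b Y_{t,b}}$. Since the minibatch maximum has support in $[a,b]$, the stochastic increment has finite variance and its expected direction always points toward this value, so under $\sum_t \beta_t = \infty$ and $\sum_t \beta_t^2 < \infty$ the same Robbins--Monro argument invoked for the percentile proposition \citep{Robbins:1951,Kushner:2003} yields $\ymax_t \to \E{\max_b Y_{t,b}}$. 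The identical argument gives $\ymin_t \to \E{\min_b Y_{t,b}}$.

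Next I would compute these expected extremes. For $B$ i.i.d.\ draws uniform on $[a,b]$ the maximum has distribution function $\big((y-a)/(b-a)\big)^B$ on $[a,b]$, so a direct integration (or the affine rescaling $Y = a + (b-a)U$ with $U$ uniform on $[0,1]$, for which the expected maximum and minimum are $B/(B+1)$ and $1/(B+1)$) gives
\[
\E{\max_b Y_{t,b}} = a + (b-a)\frac{B}{B+1}
\quad\text{and}\quad
\E{\min_b Y_{t,b}} = a + (b-a)\frac{1}{B+1} \,.
\]
Finally I would pass to the tail probability: since $Y$ is uniform on $[a,b]$, in the limit $\P{Y > \ymax_t} = (b - \ymax_t)/(b-a)$, and substituting the expected maximum gives $b - \ymax_t = (b-a)/(B+1)$, hence $\P{Y > \ymax_t} = 1/(B+1)$; symmetrically $\P{Y < \ymin_t} = (\ymin_t - a)/(b-a) = 1/(B+1)$. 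It then remains only to check consistency with the claim: with $p = (B-1)/(B+1)$ we have $(1-p)/2 = 1/(B+1)$, which closes the argument.

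The main obstacle is the convergence step, but it is handled entirely by the stochastic-approximation machinery already used for the percentile proposition; the remaining order-statistic and tail computations are routine. The only genuine content is the observation that the fixed point of an exponential moving average of the minibatch extreme is precisely the expected extreme order statistic, whose uniform tail mass is $1/(B+1)$, giving the correspondence $p = (B-1)/(B+1)$.
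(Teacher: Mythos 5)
Your proposal is correct and follows essentially the same route as the paper's own proof: convergence of the exponential moving averages to the expected extreme order statistics via stochastic approximation, computation of $\E{\max_b Y_{t,b}} = a + \frac{B}{B+1}(b-a)$ from the CDF $\left(\frac{x-a}{b-a}\right)^B$, and the resulting uniform tail mass $\frac{1}{B+1} = (1-p)/2$. Your write-up is in fact slightly more explicit than the paper's in identifying the fixed point of the update and in spelling out the tail-probability calculation, but these are elaborations of the same argument rather than a different approach.
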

\begin{proof}
Because of the conditions on the step size, the quantities $\ymin_t$ and $\ymax_t$ will converge to the expected value for the minimum and maximum of a set of $B$ i.i.d. random variables.
The cumulative distribution function (CDF) for the maximum of $B$ i.i.d. random variables with CDF $F(x)$ is $F(x)^B$, since
\[
P(x < \max_{1\le b \le B} Y_b) = \prod_{b=1}^B P( x < Y_b ) = F(x)^B
\]
The CDF for a uniform random variables with support on $[a,b]$ is
\[
F(x) = 
\left\{\begin{array}{ll}
0 & \text{if $x < a$,}\\
\frac{x - a}{b - a} & \text{if $a \le x \le b$,}\\
1 & \text{if $x > b$.}
\end{array}\right.
\]
Therefore, 
\[
P( x \le \max_{1\le b \le B} Y_b ) =
\left\{\begin{array}{ll}
0 & \text{if $x < a$,}\\
\left(\frac{x - a}{b - a}\right)^B & \text{if $a \le x \le b$,}\\
1 & \text{if $x > b$.}
\end{array}\right.
\]
The associated expected value can then be calculated to be
\[
\E{ \max_{1\le b \le B} Y_b } = a + \frac{B}{B+1} ( b - a ) \,,
\]
so that a fraction of $\frac{1}{B+1}$ of samples will be larger than this value.
Through a similar reasoning, an additional fraction of $\frac{1}{B+1}$ will be smaller than the minimum, and a ratio of $p= \frac{B-1}{B+1}$ will on average fall between these values.
\end{proof}

\begin{theoremAppendix}
Consider a weighted running average $x_t$ updated from a stream of data $\{Z_t\}_{t=1}^\infty$ using
\[
(1 - \beta_t)x_{t-1} + \beta_t Z_t \,,
\]
with
\[
\beta_t \equiv \frac{\beta}{1 - (1 - \beta)^t} \,,
\]
where $\beta$ is a constant.
Then 1) the relative weights of the data in $x_t$ are the same as when only the constant step size $\beta$ is used, and 2) the average does not depend on the initial value $x_0$.
\end{theoremAppendix}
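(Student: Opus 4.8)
The plan is to derive an explicit closed form for $x_t$ and then read off both conclusions directly. Write $c_t \equiv 1 - (1-\beta)^t$, so that the prescribed step size is $\beta_t = \beta/c_t$. I claim that for all $t \ge 1$,
\[
x_t = \frac{\beta}{c_t} \sum_{i=1}^t (1-\beta)^{t-i} Z_i \,,
\]
and I would establish this by induction on $t$.

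The key algebraic fact that makes the induction telescope is the identity
\[
1 - \beta_t = \frac{(1-\beta)\, c_{t-1}}{c_t} \,,
\]
which follows from $c_t - \beta = (1-\beta) - (1-\beta)^t = (1-\beta)\,c_{t-1}$. The base case $t=1$ is immediate, since $c_1 = \beta$ forces $\beta_1 = 1$ and hence $x_1 = Z_1$, matching the claimed formula. For the inductive step I would substitute this identity together with the inductive hypothesis into $x_t = (1-\beta_t)x_{t-1} + \beta_t Z_t$; the factor $c_{t-1}$ cancels, each retained summand picks up one extra factor of $(1-\beta)$, and the new contribution $\beta_t Z_t = (\beta/c_t)Z_t$ supplies the $i=t$ term, yielding the closed form at time $t$.

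With the closed form in hand both assertions follow. For part (2), the formula contains no $x_0$ term at all, so $x_t$ is manifestly independent of the initialization; intuitively this is because $\beta_1 = 1$ overwrites $x_0$ entirely on the first step. For part (1), unrolling the constant-step-size recursion gives $x_t^{\mathrm{const}} = (1-\beta)^t x_0 + \beta\sum_{i=1}^t (1-\beta)^{t-i} Z_i$, so it places weight $\beta(1-\beta)^{t-i}$ on $Z_i$, whereas our scheme places weight $(\beta/c_t)(1-\beta)^{t-i}$. These differ only by the common factor $1/c_t$, so the ratio of the weights on any two data points $Z_i$ and $Z_j$ equals $(1-\beta)^{\,j-i}$ under either scheme, which is precisely the statement that the relative weights agree. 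As a consistency check, $\sum_{i=1}^t (1-\beta)^{t-i} = c_t/\beta$ by the geometric series, so our weights sum to one: the time-varying step size is exactly the renormalization that converts the constant-$\beta$ geometric weighting into a proper weighted average while discarding the contribution of $x_0$.

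I expect no serious obstacle; the only step requiring a moment's care is spotting the identity $1-\beta_t = (1-\beta)\,c_{t-1}/c_t$, after which the induction and the comparison of weights are routine bookkeeping.
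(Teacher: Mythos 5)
Your proof is correct, but it takes a genuinely different route than the paper's. The paper's induction establishes the relation $x_t = \bigl(x_t^\beta - (1-\beta)^t x_0\bigr)/\bigl(1 - (1-\beta)^t\bigr)$, where $x_t^\beta$ denotes the ordinary constant-step-size average started from the same $x_0$; both conclusions are then read off by interpreting this identity: the numerator deletes exactly the weight $(1-\beta)^t$ that $x_0$ carries inside $x_t^\beta$, and the denominator renormalizes what remains. You instead induct on a fully unrolled closed form, $x_t = \frac{\beta}{c_t}\sum_{i=1}^t (1-\beta)^{t-i}Z_i$ with $c_t = 1-(1-\beta)^t$, using the identity $1-\beta_t = (1-\beta)c_{t-1}/c_t$ to make the induction telescope. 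The two key lemmas are equivalent (substituting the unrolled form of $x_t^\beta$ into the paper's identity yields yours), but your version has two advantages: part (1) about relative weights becomes an explicit, checkable statement about the per-datum weights $(\beta/c_t)(1-\beta)^{t-i}$ versus $\beta(1-\beta)^{t-i}$, whereas the paper justifies it at the level of interpretation of its identity; and part (2) is immediate since $x_0$ simply never appears in your formula (you also note the slicker observation that $\beta_1 = 1$ overwrites $x_0$ on the first step, which the paper uses only implicitly through its base case $x_1 = Z_1$). What the paper's formulation buys instead is the direct connection, emphasized in the surrounding text, to Adam-style bias correction: it exhibits the new estimator as the constant-$\beta$ exponential moving average corrected in place for its initialization bias, rather than as a reweighted sum over the data.
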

\begin{proof}
The point of the fact is to show that
\begin{equation}\label{init_beta}
\mu_t
= \frac{\mu_t^\beta - (1 - \beta)^t\mu_0}{1 - (1 - \beta)^t} \,,\quad\forall t\,,
\end{equation}
where
\[
\mu_t^\beta = ( 1- \beta)\mu_{t-1}^\beta + \beta Z_t \,,\quad\forall t \,,
\]
and where $\mu_0 = \mu_0^\beta$.  Note that $\mu_t$ as defined by \eqref{init_beta} exactly removes the contribution of the initial value $\mu_0$, which at time $t$ have weight $(1 - \beta)^t$ in the exponential moving average $\mu_t^\beta$, and then renormalizes the remaining value by dividing by $1 - (1 - \beta)^t$, such that the relative weights of the observed samples $\{Z_t\}_{t=1}^\infty$ is conserved.

If \eqref{init_beta} holds for $\mu_{t-1}$, then
\begin{align*}
\mu_t
& = \left( 1 - \frac{\beta}{1 - (1 - \beta)^t} \right)\mu_{t-1} + \frac{\beta}{1 - (1 - \beta)^t} Y_t \\
& = \left( 1 - \frac{\beta}{1 - (1 - \beta)^t} \right)\frac{\mu_{t-1}^\beta - (1 - \beta)^{t-1} \mu_0}{1 - (1 - \beta)^{t-1}}\\
& \qquad + \frac{\beta}{1 - (1 - \beta)^t} Y_t \\
& = \frac{1 - (1 - \beta)^t - \beta}{1 - (1 - \beta)^t} \frac{\mu_{t-1}^\beta - (1 - \beta)^{t-1} \mu_0}{1 - (1 - \beta)^{t-1}} \\
& \qquad + \frac{\beta}{1 - (1 - \beta)^t} Y_t \\
& = \frac{(1 - \beta) ( 1 - (1 - \beta)^{t-1})}{1 - (1 - \beta)^t} \frac{\mu_{t-1}^\beta - (1 - \beta)^{t-1} \mu_0}{1 - (1 - \beta)^{t-1}} \\
& \qquad + \frac{\beta}{1 - (1 - \beta)^t} Y_t \\
& = \frac{(1 - \beta)( \mu_{t-1}^\beta - (1 - \beta)^{t-1}\mu_0)}{1 - (1 - \beta)^t} + \frac{\beta}{1 - (1 - \beta)^t} Y_t \\
& = \frac{(1 - \beta) \mu_{t-1}^\beta + \beta Y_t - (1 - \beta)^t \mu_0}{1 - (1 - \beta)^t} \\
& = \frac{\mu_t^\beta - (1 - \beta)^t \mu_0}{1 - (1 - \beta)^t} \,,
\end{align*}
so that then \eqref{init_beta} holds for $\mu_t$.
Finally, verify that $\mu_1 = Y_1$. Therefore, \eqref{init_beta} holds for all $t$ by induction.
\end{proof}

\begin{table*}
\begin{center}
\small
\begin{tabular}{lrrrr}
Game & Random & Human & Double DQN & Double DQN with Pop-Art \\
\hline
               Alien & $   227.80$ &  $  7127.70$ &  $\mathbf{   3747.70 }$ &  $           3213.50 $ \\
              Amidar & $     5.80$ &  $  1719.50$ &  $\mathbf{   1793.30 }$ &  $            782.50 $ \\
             Assault & $   222.40$ &  $   742.00$ &  $            5393.20 $ &  $\mathbf{   9011.60 }$ \\
             Asterix & $   210.00$ &  $  8503.30$ &  $           17356.50 $ &  $\mathbf{  18919.50 }$ \\
           Asteroids & $   719.10$ &  $ 47388.70$ &  $             734.70 $ &  $\mathbf{   2869.30 }$ \\
            Atlantis & $ 12850.00$ &  $ 29028.10$ &  $          106056.00 $ &  $\mathbf{ 340076.00 }$ \\
          Bank Heist & $    14.20$ &  $   753.10$ &  $            1030.60 $ &  $\mathbf{   1103.30 }$ \\
         Battle Zone & $  2360.00$ &  $ 37187.50$ &  $\mathbf{  31700.00 }$ &  $           8220.00 $ \\
          Beam Rider & $   363.90$ &  $ 16926.50$ &  $\mathbf{  13772.80 }$ &  $           8299.40 $ \\
             Berzerk & $   123.70$ &  $  2630.40$ &  $\mathbf{   1225.40 }$ &  $           1199.60 $ \\
             Bowling & $    23.10$ &  $   160.70$ &  $              68.10 $ &  $\mathbf{    102.10 }$ \\
              Boxing & $     0.10$ &  $    12.10$ &  $              91.60 $ &  $\mathbf{     99.30 }$ \\
            Breakout & $     1.70$ &  $    30.50$ &  $\mathbf{    418.50 }$ &  $            344.10 $ \\
           Centipede & $  2090.90$ &  $ 12017.00$ &  $            5409.40 $ &  $\mathbf{  49065.80 }$ \\
     Chopper Command & $   811.00$ &  $  7387.80$ &  $\mathbf{   5809.00 }$ &  $            775.00 $ \\
       Crazy Climber & $ 10780.50$ &  $ 35829.40$ &  $          117282.00 $ &  $\mathbf{ 119679.00 }$ \\
            Defender & $  2874.50$ &  $ 18688.90$ &  $\mathbf{  35338.50 }$ &  $          11099.00 $ \\
        Demon Attack & $   152.10$ &  $  1971.00$ &  $           58044.20 $ &  $\mathbf{  63644.90 }$ \\
         Double Dunk & $   -18.60$ &  $   -16.40$ &  $\mathbf{     -5.50 }$ &  $            -11.50 $ \\
              Enduro & $     0.00$ &  $   860.50$ &  $            1211.80 $ &  $\mathbf{   2002.10 }$ \\
       Fishing Derby & $   -91.70$ &  $   -38.70$ &  $              15.50 $ &  $\mathbf{     45.10 }$ \\
             Freeway & $     0.00$ &  $    29.60$ &  $              33.30 $ &  $\mathbf{     33.40 }$ \\
           Frostbite & $    65.20$ &  $  4334.70$ &  $            1683.30 $ &  $\mathbf{   3469.60 }$ \\
              Gopher & $   257.60$ &  $  2412.50$ &  $           14840.80 $ &  $\mathbf{  56218.20 }$ \\
            Gravitar & $   173.00$ &  $  3351.40$ &  $             412.00 $ &  $\mathbf{    483.50 }$ \\
            H.E.R.O. & $  1027.00$ &  $ 30826.40$ &  $\mathbf{  20130.20 }$ &  $          14225.20 $ \\
          Ice Hockey & $   -11.20$ &  $     0.90$ &  $\mathbf{     -2.70 }$ &  $             -4.10 $ \\
          James Bond & $    29.00$ &  $   302.80$ &  $\mathbf{   1358.00 }$ &  $            507.50 $ \\
            Kangaroo & $    52.00$ &  $  3035.00$ &  $           12992.00 $ &  $\mathbf{  13150.00 }$ \\
               Krull & $  1598.00$ &  $  2665.50$ &  $            7920.50 $ &  $\mathbf{   9745.10 }$ \\
      Kung-Fu Master & $   258.50$ &  $ 22736.30$ &  $           29710.00 $ &  $\mathbf{  34393.00 }$ \\
 Montezuma's Revenge & $     0.00$ &  $  4753.30$ &  $\mathbf{      0.00 }$ &  $\mathbf{      0.00 }$ \\
          Ms. Pacman & $   307.30$ &  $  6951.60$ &  $            2711.40 $ &  $\mathbf{   4963.80 }$ \\
      Name This Game & $  2292.30$ &  $  8049.00$ &  $           10616.00 $ &  $\mathbf{  15851.20 }$ \\
             Phoenix & $   761.40$ &  $  7242.60$ &  $\mathbf{  12252.50 }$ &  $           6202.50 $ \\
             Pitfall & $  -229.40$ &  $  6463.70$ &  $             -29.90 $ &  $\mathbf{     -2.60 }$ \\
                Pong & $   -20.70$ &  $    14.60$ &  $\mathbf{     20.90 }$ &  $             20.60 $ \\
         Private Eye & $    24.90$ &  $ 69571.30$ &  $             129.70 $ &  $\mathbf{    286.70 }$ \\
              Q*Bert & $   163.90$ &  $ 13455.00$ &  $\mathbf{  15088.50 }$ &  $           5236.80 $ \\
          River Raid & $  1338.50$ &  $ 17118.00$ &  $\mathbf{  14884.50 }$ &  $          12530.80 $ \\
         Road Runner & $    11.50$ &  $  7845.00$ &  $           44127.00 $ &  $\mathbf{  47770.00 }$ \\
            Robotank & $     2.20$ &  $    11.90$ &  $\mathbf{     65.10 }$ &  $             64.30 $ \\
            Seaquest & $    68.40$ &  $ 42054.70$ &  $\mathbf{  16452.70 }$ &  $          10932.30 $ \\
              Skiing & $-17098.10$ &  $ -4336.90$ &  $\mathbf{  -9021.80 }$ &  $         -13585.10 $ \\
             Solaris & $  1236.30$ &  $ 12326.70$ &  $            3067.80 $ &  $\mathbf{   4544.80 }$ \\
      Space Invaders & $   148.00$ &  $  1668.70$ &  $            2525.50 $ &  $\mathbf{   2589.70 }$ \\
         Star Gunner & $   664.00$ &  $ 10250.00$ &  $\mathbf{  60142.00 }$ &  $            589.00 $ \\
            Surround & $   -10.00$ &  $     6.50$ &  $              -2.90 $ &  $\mathbf{     -2.50 }$ \\
              Tennis & $   -23.80$ &  $    -8.30$ &  $             -22.80 $ &  $\mathbf{     12.10 }$ \\
          Time Pilot & $  3568.00$ &  $  5229.20$ &  $\mathbf{   8339.00 }$ &  $           4870.00 $ \\
           Tutankham & $    11.40$ &  $   167.60$ &  $\mathbf{    218.40 }$ &  $            183.90 $ \\
         Up and Down & $   533.40$ &  $ 11693.20$ &  $\mathbf{  22972.20 }$ &  $          22474.40 $ \\
             Venture & $     0.00$ &  $  1187.50$ &  $              98.00 $ &  $\mathbf{   1172.00 }$ \\
       Video Pinball & $ 16256.90$ &  $ 17667.90$ &  $\mathbf{ 309941.90 }$ &  $          56287.00 $ \\
       Wizard of Wor & $   563.50$ &  $  4756.50$ &  $\mathbf{   7492.00 }$ &  $            483.00 $ \\
        Yars Revenge & $  3092.90$ &  $ 54576.90$ &  $           11712.60 $ &  $\mathbf{  21409.50 }$ \\
              Zaxxon & $    32.50$ &  $  9173.30$ &  $           10163.00 $ &  $\mathbf{  14402.00 }$ 
\end{tabular}
\caption{\label{raw_scores} Raw scores for a random agent, a human tested, Double DQN as described by \citet{vanHasselt:2016}, and Double DQN with Pop-Art and no reward clipping on 30 minutes of simulated play (108,000 frames). The random, human, and Double DQN scores are all taken from \citet{Wang:2016}.}
\end{center}
\end{table*}

\end{document}